\newtheorem{assumption}{Assumption}
\DeclareMathOperator*{\argmin}{arg\,min}
\begin{document}

\title{Primal-Only Actor Critic Algorithm for Robust Constrained Average Cost MDPs}

\author{\name Anirudh Satheesh$^*$ \email anirudhs@terpmail.umd.edu \\
       \addr 
       University of Maryland
       \AND
       \name Sooraj Sathish$^*$ \email Sooraj.Sathish@iiitb.ac.in \\
       \addr IIIT Bangalore
       \AND
       \name Swetha Ganesh \email ganesh49@purdue.edu \\
       \addr Purdue University
       \AND
       \name Keenan Powell \email kpowell1@terpmail.umd.edu \\
       \addr University of Maryland
       \AND
       \name Vaneet Aggarwal \email vaneet@purdue.edu \\
       \addr Purdue University}

\maketitle

\def\thefootnote{$^*$}%
\footnotetext{These authors contributed equally.}%
\def\thefootnote{\arabic{footnote}}%

\begin{abstract}
  In this work, we study the problem of finding robust and safe policies in Robust Constrained Average-Cost Markov Decision Processes (RCMDPs). A key challenge in this setting is the lack of strong duality, which prevents the direct use of standard primal-dual methods for constrained RL. Additional difficulties arise from the average-cost setting, where the Robust Bellman operator is not a contraction under any norm. To address these challenges, we propose an actor-critic algorithm for Average-Cost RCMDPs. We show that our method achieves both \(\epsilon\)-feasibility and \(\epsilon\)-optimality, and we establish a sample complexities of \(\tilde{O}\left(\epsilon^{-4}\right)\) and \(\tilde{O}\left(\epsilon^{-6}\right)\) with and without slackness assumption, which is comparable to the discounted setting.
\end{abstract}
\section{Introduction}
Reinforcement Learning (RL) has achieved remarkable success across domains such as robotics \citep{chen2023option}, transportation \citep{al2019deeppool},  and large language model fine-tuning \citep{gaur2025sample}. However, most approaches assume that training and deployment occur under identical conditions. In practice, real-world experiments are costly and risky, necessitating reliance on simulators. Additionally, even the most detailed simulators cannot fully capture the variability, noise, and stochasticity of real-world environments. This mismatch, known as the sim2real gap, can lead to severe performance degradation and, in safety-critical systems, catastrophic failures or equipment damage. 

Many real-world applications also impose strict safety or resource constraints: autonomous vehicles must guarantee human safety, communication networks must respect bandwidth limits, and transportation systems must meet time constraints, and the stochastic nature of RL policies makes consistently satisfying these constraints challenging. The need for both robustness against environmental shifts and adherence to strict constraints motivates the Robust Constrained Markov Decision Process (RCMDP) framework, where policies must guarantee worst-case performance under an uncertainty set of transitions while satisfying constraints. 

In the RCMDP framework, distributional robustness is modeled by defining an uncertainty set of environments that captures potential distribution shifts in transition dynamics, with the objective of optimizing the worst-case performance within this set. Constraint satisfaction is incorporated by augmenting the reward function with additional constraint functions that must remain below specified thresholds. Our setting adds further complexity by focusing on the infinite-horizon average reward, rather than the discounted return. This formulation is more suitable for capturing long-term objectives and is particularly relevant in applications requiring persistent and consistent performance over extended time horizons.

The literature on robust and constrained MDPs is still limited, especially in the average-reward case. Strong duality does not hold in RCMDPs \citep{wang2022robust, ma2025rectified}, which prevents extending sample-efficient primal-dual algorithms to the robust MDP formulation. Thus, recent work has opted for primal-only methods. \citet{kitamura2024near} propose an epigraph formulation for discounted reward RCMDP, but the necessity of binary search increases the sample complexity to \(\tilde{O}(\epsilon^{-6}\log(\epsilon^{-1}))\). Additionally, this work requires perfect estimation of the robust value function, which may not be tractable for large state spaces. \citet{ma2025rectified} and \citet{ganguly2025efficient} develop primal-only algorithms that achieve sample complexities of \(O(\epsilon^{-6})\), while also in the discounted setting. 

\paragraph{Challenges and Contributions}
Two primary challenges motivated our specific solution method. Firstly, the average reward setting's Bellman operator does not satisfy a trivial contraction property like we have in the discounted setting. Secondly, most works on Constrained MDPs utilize the Primal-Dual method and propose algorithms that alternatively update the respective Lagrangian multipliers. However, \cite{wang2022robust,ma2025rectified} shows that strong duality does not hold in the robust setting, which motivates the choice of primal-only algorithms.

We summarize our work and contributions as follows:
\begin{itemize}
    \item We present the first formulation and analysis of the Average Cost RCMDPs, extending beyond the discounted reward setting and addressing the major problem of the lack of strong duality.
    \item We propose an actor critic algorithm that theoretically guarantees $\epsilon$-feasibility and optimality for different uncertainty sets (Contamination, TV Distance, Wasserstein).
    \item We show sample complexity guarantees of \(O(\epsilon^{-4})\) with the slackness assumption and \(O(\epsilon^{-6})\) without the slackness assumption.
\end{itemize}
\section{Related Work}\label{section: related-works}
\subsection{Constrained Reward MDPs}
Constrained Reward MDPs (CMDPs) have been studied extensively in the literature, both in the discounted reward and average reward setting. The model-based approaches \citep{chen2022learning,agarwal2022regret,agarwal2021concave} construct estimates of the transition probabilities and then derive \textit{safe} policies. These approaches often involve continuously solving Linear Programs (LPs) as estimated models are updated, leading to computational inefficiency and a need for substantial memory. Model-free algorithms \citep{ wei2022provably, bai2024learning,xu2025global} learn the optimal policy or value function directly from sampling of the environment. This is generally more compute efficient and requires less memory. 
Owing to the clear advantages and the real-world applicability of model-free algorithms, we also focus our attention on this setting. 

Constrained RL problems have been addressed using various model-free solution methods, but the most common approach has been the primal-dual method \citep{altman2021constrained,paternain2022safe,bai2022achieving,wang2022robust, bai2023achieving,bai2024learning,mondal2024sample,xu2025global}. Here, the constrained problem is converted into its dual formulation, where the objective is a weighted sum of the reward and the constraints. These weights are Lagrangian multipliers, which are updated alternatively until convergence. \cite{paternain2019constrained} show that strong duality holds in the non-robust constrained RL setting, and this primal-dual method attains zero duality gap. The less-studied counterpart are the works on primal-only solutions \citep{dalal2018safe,liu2020ipo,yang2020projection}. These works ensure that the constraints are not violated (or violation is bounded) without the use of Lagrange multipliers. For example, CRPO \citep{pmlr-v139-xu21a} ensures convergence to an optimal safe policy by only updating the reward when no constraint is violated. They leverage a novel combinatorial bucketing approach to show the convergence even when the objective being optimised switches every iteration. 
Since strong duality does not hold in the distributionally robust setting \citep{wang2022robust,ma2025rectified}, we look to design a primal-only algorithm. 

\begin{table*}[ht]
\centering
\resizebox{\textwidth}{!}{%
\begin{tabular}{|c|c|c|}
\hline
\textbf{Method} & \textbf{Setting} & \textbf{Sample Complexity} \\
\hline
\citep{xu2025global} & Constrained, Average & $O(\epsilon^{-2})$\\
\hline
\citep{li2022first} & Robust, Discounted & $\widetilde{O}(\epsilon^{-2})$ \\
\hline
\citep{xu2025efficient} & Robust, Average & $\widetilde{O}(\epsilon^{-2})$ \\
\hline
\citep{kitamura2024near} & Robust, Constrained, Discounted & $\widetilde{O}(\epsilon^{-6})$ \\
\hline
\cite{ma2025rectified} & Robust, Constrained, Discounted & $O(\epsilon^{-6})$ \\
\hline
\cite{ganguly2025efficient} (w/o Slackness) & Robust, Constrained, Discounted & $O(\epsilon^{-6})$ \\
\hline
\cite{ganguly2025efficient} (w/ Slackness) & Robust, Constrained, Discounted & $O(\epsilon^{-4})$ \\
\hline
Our work (w/o Slackness) & Robust, Constrained, Average & ${O}(\epsilon^{-6})$  \\
\hline
Our work (w/ Slackness) & Robust, Constrained, Average & ${O}(\epsilon^{-4})$  \\
\hline
\end{tabular}
}
\caption{Comparison of sample complexities of different methods to solve Robust and Constrained MDPs. Our work achieves state of the art sample complexity over existing robust constrained MDP methods.}
\label{tab:complexity_comparison}
\end{table*}

\subsection{Robust RL}
Dynamic programming approaches to solve model-based robust RL problems have been explored extensively in the past \citep{NIPS2003_300891a6,Iyengar2005-aj, wiesemann2013robust, tamar2014scaling}. More recent studies on the discounted reward setting have focused their efforts on problems where the uncertainty set is unknown and only samples from the nominal distribution can be collected \citep{zhou2021finite,panaganti2022sample,wang2022robust,wang2023policy}. 

Most studies on robust RL have primarily considered the infinite-horizon discounted reward setting, where the Bellman operator always satisfies a contraction property it inherits from the discount factor. Since no such trivial contraction property exists in the average reward setting, the  literature on robust average reward \citep{wang2023robustaverage, chen2025sample}  consider approaches where the results from discounted reward could be converted to that for average reward, while in the absence of constraints. Some average-reward works explore model-free solutions through Halpern iteration \citep{roch2025finite}, while others exploit ODE methods in stochastic approximation to prove convergence \citep{wang2023model}. 

More recently, a novel semi-norm with the contraction property has been found \citep{xu2025finite}, and a corresponding Actor-Critic approach to robust average reward unconstrained RL has been proposed \citep{xu2025efficient}.
In our work, we leverage this semi-norm and propose a similar Actor-Critic approach to Robust Constrained Average cost RL.

\subsection{Robust Constrained MDPs}
The literature on robust  constrained MDPs (RCMDPs) is limited because strong duality does not hold in this setting \citep{wang2022robust}. Some studies \citep{russel2020robust,mankowitz2020robust,wang2022robust, zhang2024distributionally} have tried to address this problem through primal-dual methods by quantifying and tracking the duality gap or restricting to certain policy classes that satisfy strong duality. However, these works do not provide explicit iteration and sample complexity guarantees. 

More recently, \citet{kitamura2024near} propose an epigraph formulation of the discounted primal problem and provide explicit sample complexity guarantees. Unfortunately, the binary search employed in this solution elicits a very high sample complexity. Furthermore, it is known that the binary search approaches fail when the robust value estimates are noisy \citep{horstein2003sequential}.

To address the above shortcomings, \citet{ganguly2025efficient} propose a unique formulation of the discounted RCMDP problem without the use of epigraphs and binary search. They show $\epsilon$-feasibility and optimality of their mirror-descent algorithm, offering improved iteration complexity guarantees.
A parallel work by \cite{ma2025rectified} takes inspiration from CRPO (\cite{pmlr-v139-xu21a}) and achieves the same sample complexity guarantees as \cite{ganguly2025efficient}. 

We emphasize that, to the best of our knowledge, this is the first work to address the RCMDP problem in the average reward/cost setting and provide optimal sample complexity guarantees. Table \ref{tab:complexity_comparison} is a concise comparison of the various methods in the literature and demonstrates the optimal performance of our algorithm. 


\section{Formulation}
\label{sec: Formulation}
\subsection{Robust Average Cost MDPs}
\label{subsec: average cost MDPs formulation}
An infinite horizon robust average cost Markov Decision Process (MDP) can be defined by the tuple \((\mathcal{S}, \mathcal{A}, r, \mathcal{P}, \rho)\), where \(\mathcal{S}\) is the state space, \(r: \mathcal{S} \times \mathcal{A} \to [0, 1]\) is the cost function, \(\mathcal{P}\) is an uncertainty set of transition kernels, and \(\rho: \mathcal{S} \to [0, 1]\) is the initial state distribution. At each timestep, a transition kernel \(P\) is randomly selected from \(\mathcal{P}\) and is used to transition the environment to the next state. We focus on the \((s, a)\)-rectangular uncertainty set \(\mathcal{P} = \otimes_{s, a} \mathcal{P}^a_s\) \citep{NIPS2003_300891a6, Iyengar2005-aj}, where 
\begin{align}
    \mathcal{P}^a_s = \{P \in \Delta(\mathcal{S}): D(P, P^{\circ}) \leq R\}
\end{align}
and \(P^\circ\) is the nominal transition kernel. The goal of the policy \(\pi: \mathcal{S} \to \Delta(\mathcal{A})\) is to maximize the worst-case average cost over the set of transitions \(\mathcal{P}\)
\begin{align}
    \label{eqn: time varying model average cost objective}
    g_\mathcal{P}^\pi(s) = \max_{\kappa \in \otimes_{k \geq 0} \mathcal{P}} \lim_{T \to \infty} \mathbb{E}_{\kappa, \pi} \left[\frac{1}{T} \sum_{t=0}^{T-1} r_t \bigg| s_0 = s \right]
\end{align}
\citet{JMLR:v25:23-0526} showed this objective is the same under the stationary model
\begin{align}
    \label{eqn: stationary model average cost objective}
    g_\mathcal{P}^\pi(s) = \max_{P \in  \mathcal{P}} \lim_{T \to \infty} \mathbb{E}_{P, \pi} \left[\frac{1}{T} \sum_{t=0}^{T-1} r_t \bigg| s_0 = s \right]
\end{align}
Thus, we focus solely on the stationary case. We denote the maximizers of Eq (\ref{eqn: stationary model average cost objective}) as the worst-case transition kernels and \(\Omega_g^\pi = \{P \in \mathcal{P}: g_P^\pi = g_{\mathcal{P}}^\pi\}\), where
\begin{align}
    g_P^\pi = \lim_{T \to \infty} \mathbb{E}_{P, \pi} \left[\frac{1}{T} \sum_{t=0}^{T-1} r_t \bigg| s_0 = s \right]
\end{align}
is the average cost of \(\pi\) with transition kernel \(P\).

We also focus on the model-free setting, where samples can only be accessed from the nominal transition kernel \(P^\circ\). We are interested in estimating both the robust value function \(V_{P_V}^\pi\) and the robust average cost \(g_{P_V}^\pi\). The robust value function can be defined through the robust Bellman equation in Theorem~\ref{thm: robust bellman equation}.

\begin{theorem}[Robust Bellman Equation, Theorem 3.1 in \citep{wang2023model}]
    \label{thm: robust bellman equation}
     If \((g, V)\) is a solution to the robust Bellman equation
     \begin{align}
        \label{eqn: robust bellman equation}
         V(s) = \sum_a \pi(a | s)(r(s, a) - g + \sigma_{\mathcal{P}^a_s}(V)), \forall s \in \mathcal{S}
     \end{align}
     where \(\sigma_{\mathcal{P}^a_s} = \min_{P \in \mathcal{P}^a_s}\) is denoted as the support function, then the scalar \(g\) corresponds to the robust average cost, i.e., \(g = g_\mathcal{P}^\pi\), and the worst-case transition kernel \(P_V\) belongs to the set of minimizing transition kernels, i.e., \(P_V \in \Omega_\mathcal{P}^\pi\) where \(\Omega_g^\pi = \{P \in \mathcal{P}: g_P^\pi = g_{\mathcal{P}}^\pi\}\). Furthermore, the function \(V\) is unique up to an additive constant, where if \(V\) is a solution to the Bellman equation, then we have \(V = V_{P_V}^\pi + c\mathbf{e}\), where \(c \in \mathbb{R}\) and \(\mathbf{e}\) is the all-ones vector in \(\mathbb{R}^{|\mathcal{S}|}\), and \(V_{P_V}^{\pi}\) is defined as
the relative value function of the policy \(\pi\) under the single transition \(P_V\) as follows:
\begin{align}
    V_{P_V}^{\pi}(s) = \mathbb{E}_{P_V, \pi} \left[\sum_{t=0}^\infty (r_t - g_{P_V}^\pi) \bigg| s_0 = s\right] 
\end{align}
\end{theorem}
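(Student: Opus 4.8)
The plan is to reduce the robust Bellman equation to an ordinary average-cost (Poisson) equation under a single, carefully chosen transition kernel, apply classical average-cost Markov-chain theory there, and then lift the conclusions to the whole uncertainty set using the extremal structure of the support function. Fix a solution $(g,V)$. Since each $\mathcal P^a_s$ is a closed, bounded subset of the simplex (a sublevel set of $D(\cdot,P^\circ)$), hence compact, and $P\mapsto\langle P,V\rangle$ is linear, a minimizer exists; pick $P_V(\cdot\mid s,a)\in\argmin_{P\in\mathcal P^a_s}\langle P,V\rangle$, so that $\sigma_{\mathcal P^a_s}(V)=\langle P_V(\cdot\mid s,a),V\rangle$. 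Writing $r^\pi(s)=\sum_a\pi(a\mid s)r(s,a)$ and $P_V^\pi(s'\mid s)=\sum_a\pi(a\mid s)P_V(s'\mid s,a)$, the robust Bellman equation collapses to the Poisson equation $V=r^\pi-g\mathbf e+P_V^\pi V$.

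From here I would invoke the standard theory of average-cost chains. Iterating gives $V=\sum_{t=0}^{T-1}(P_V^\pi)^t(r^\pi-g\mathbf e)+(P_V^\pi)^T V$; under the standard uniform-ergodicity assumption on the chains induced by $\mathcal P$ (irreducible and aperiodic with a mixing rate bounded uniformly away from the degenerate case), $(P_V^\pi)^t\to\bar P_V^\pi$, whose rows are the strictly positive stationary distribution, so dividing by $T$ and letting $T\to\infty$ yields $0=\bar P_V^\pi(r^\pi-g\mathbf e)$, i.e. the scalar $g$ equals $g_{P_V}^\pi$. Simultaneously, $(P_V^\pi)^T V$ converges to a constant vector while $\sum_{t=0}^{T-1}(P_V^\pi)^t(r^\pi-g\mathbf e)$ converges (absolutely, by geometric mixing) to exactly the relative value function $V_{P_V}^\pi$; hence $V=V_{P_V}^\pi+c\mathbf e$ for some $c\in\mathbb R$, the claimed representation.

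Next I would lift this to all of $\mathcal P$. For any $P\in\mathcal P$, minimality of the support function gives $\sigma_{\mathcal P^a_s}(V)\le\langle P(\cdot\mid s,a),V\rangle$, so the robust Bellman equation yields $V\le r^\pi-g\mathbf e+P^\pi V$; iterating and Ces\`aro-averaging as before (the stochastic matrices $(P^\pi)^t$ preserve the inequality) gives $g\le g_P^\pi$ for every $P\in\mathcal P$, while equality holds throughout at $P=P_V$, where $g=g_{P_V}^\pi$. Hence $g=\min_{P\in\mathcal P}g_P^\pi=g_\mathcal P^\pi$ and $P_V\in\Omega_g^\pi$. For uniqueness up to a constant, take two solutions $(g,V)$ and $(g,V')$ (the scalar is common by the previous sentence), choose worst-case selectors $P_V,P_{V'}$, and use minimality of the support function on the cross terms to obtain $P_V^\pi(V-V')\le V-V'\le P_{V'}^\pi(V-V')$; iterating either inequality to its Ces\`aro limit and using that the stationary distribution has full support forces $V-V'$ to be constant. (Equivalently, the robust Bellman update with $g$ held fixed is shift-invariant and $1$-Lipschitz in $\|\cdot\|_\infty$, hence a contraction in the span seminorm under the ergodicity assumption, so its fixed points differ by a multiple of $\mathbf e$.) Combined with the preceding paragraph, every solution equals $V_{P_V}^\pi$ plus a constant.

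The main obstacle is exactly the one stressed in the introduction: there is no $\gamma$-contraction, so each limit interchange above — $(P^\pi)^T\to\bar P^\pi$, the convergence of $\sum_t(P^\pi)^t(r^\pi-g\mathbf e)$ to $V_{P_V}^\pi$, and the span-seminorm contraction used for uniqueness — must be justified through a mixing property that holds \emph{uniformly} over every $P\in\mathcal P$; absent such a uniform ergodicity/Doeblin condition, $g$ need not be a scalar and $V$ need not be unique modulo constants, and even the Poisson reduction would not close cleanly. Turning this into a quantitative statement over the entire uncertainty set is the technical heart, and is where the semi-norm machinery of \citet{xu2025finite} enters.
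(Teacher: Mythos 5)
The paper itself does not prove this statement: it is imported verbatim as Theorem~3.1 of \citet{wang2023model}, so there is no in-paper proof to compare against, and your argument is a from-scratch reconstruction. It is essentially sound and follows the same classical route as the cited source: $(s,a)$-rectangularity plus compactness of each $\mathcal{P}^a_s$ yields a selector $P_V$ that collapses the robust equation to an ordinary Poisson equation; Ces\`aro-averaging under $P_V^\pi$ (equivalently, pairing with its stationary distribution) identifies $g=g_{P_V}^\pi$, and the geometrically convergent bias series gives $V=V_{P_V}^\pi+c\mathbf{e}$; minimality of the support function gives $V\le r^\pi-g\mathbf{e}+P^\pi V$ and hence $g\le g_P^\pi$ for every $P\in\mathcal{P}$, so $g$ is the extremal average cost and $P_V\in\Omega_g^\pi$; and the sandwich $P_V^\pi(V-V')\le V-V'\le P_{V'}^\pi(V-V')$ combined with full support of the stationary distribution forces any two solutions to differ by a constant. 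That last step is genuinely needed (the representation alone does not compare $V_{P_V}^\pi$ with $V_{P_{V'}}^\pi$ for different selectors), and you supply it correctly.

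Two caveats. First, a convention point: with $\sigma_{\mathcal{P}^a_s}=\min_{P\in\mathcal{P}^a_s}$ your argument correctly produces $g=\min_{P\in\mathcal{P}}g_P^\pi$, which is consistent with the theorem as stated (and with the reward-oriented source), but the paper's own Eq.~\eqref{eqn: stationary model average cost objective} defines $g_{\mathcal{P}}^\pi$ via a $\max$ over kernels; this is an inherited sign-convention mismatch in the paper rather than an error of yours, but you should state explicitly which extremum your $g$ is. Second, your closing paragraph overstates what is required: every limit in your proof is taken under a single \emph{fixed} kernel ($P_V$, $P_{V'}$, or an arbitrary fixed $P$), so per-kernel irreducibility and aperiodicity on a finite state space (Assumption~\ref{assumption: irreducibility of markov chain}) already delivers the Ces\`aro limits, the geometric convergence of the bias series, and the full-support argument; uniform-over-$\mathcal{P}$ mixing and the span-seminorm contraction of \citet{xu2025finite} are needed for the sample-based critic elsewhere in the paper, not for this static characterization. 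Only your parenthetical ``span-contraction'' shortcut for uniqueness would require that extra machinery; your primary argument does not.
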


Using Theorem~\ref{thm: robust bellman equation}, we can define \(\sigma_{\mathcal{P}^a_s}(V)\) for different uncertainty sets.

\paragraph{Contamination Uncertainty Set}
The R-contamination uncertainty set is \(\mathcal{P}^a_s = \{(1 - R)P^\circ_{s, a} + Rq \,| \, q \in \Delta(\mathcal{S})\}\), where \(R \in (0, 1)\) is the radius of the uncertainty set. The support function of the R-contamination set can be directly computed as
\begin{align}
    \label{eqn: contamination sigma}
    \sigma_{\mathcal{P}^a_s}(V) = (1-R)P^{\circ}_{s, a}V + R \max_{s} V(s)
\end{align}
We can also use this formulation to construct the estimator of the worst case transition effect
\begin{align}
    \label{eqn: contamination estimator}
    \hat{\sigma}_{\mathcal{P}^a_s}(V) = (1-R)V(s') + R \max_{x} V(x)
\end{align}
where \(s'\) is the next state using the nominal transition kernel.

\paragraph{Total Variation Uncertainty Set}
The total variation uncertainty set is \(\mathcal{P}^a_s =\) \(\\ \left\{\frac{1}{2}\|q - P^\circ_{s, a}\|_1 \leq R \,|\, q \in \Delta(\mathcal{S})\right\}\). We can define the support function using its dual formulation 
\begin{align}
    \label{eqn: total variation sigma}
    \sigma_{\mathcal{P}^a_s}(V) = \min_{\mu \geq \mathbf{0}} \left(P^{\circ}_{s, a}(V - \mu) - R\|V - \mu\|_{\text{sp}}\right)
\end{align}
where \(\|\cdot\|_{\text{sp}}\) is the span semi-norm \citep{Iyengar2005-aj}.

\paragraph{Wasserstein Uncertainty Sets} 
We consider the \(l\)-Wasserstein distance \(W_l(q, p) = \inf_{\mu \in \Gamma(p, q)}\|d\|_{\mu, l}\), where \(l \in [1, \infty)\), \(p, q \in \Delta(\mathcal{S})\), \(\Gamma(p, q)\) is the distributions over \(\mathcal{S} \times \mathcal{S}\) with marginal distributions \(p, q\), and \(\|d\|_{\mu, l} = \left(\mathbb{E}_{(X, Y) \sim \mu}\left[d(X, Y)^l\right]\right)^{\frac{1}{l}}\). The Wasserstein distance uncertainty set is then defined as \(\mathcal{P}^a_s = \{W_l(P^\circ_{s, a}, q) \leq R \,|\, q \in \Delta(\mathcal{S})\}\). Then we can define the support function for the Wasserstein uncertainty set \citep{gao2023distributionally} as
\begin{equation}
    \label{eqn: wasserstein sigma}
    \sigma_{\mathcal{P}^a_s} = \inf_{\lambda \geq 0} \left(-\lambda \delta^l + \mathbb{E}_{s \sim \mathcal{S}, a \sim \pi(s)}\left[\sup_{y}V(y) + \lambda d(\mathcal{S}, y)^l\right]\right)
\end{equation}

Following Theorem~\ref{thm: robust bellman equation}, we can define the robust Bellman operator in Theorem~\ref{thm: robust bellman operator}.
\begin{theorem}[Robust Bellman Operator \citep{JMLR:v25:23-0526}]
    The robust Bellman Operator \(\mathbf{T}_g\) is defined as
    \begin{align}
        \label{thm: robust bellman operator}
        \mathbf{T}_g(V)(s) = \sum_a \pi(a|s)\left[r(s, a) - g + \sigma_{\mathcal{P}^a_s}(V)\right], \forall s \in \mathcal{S}
    \end{align}
\end{theorem}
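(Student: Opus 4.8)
In substance this is a definition imported from \citet{JMLR:v25:23-0526}, so what must be checked is that \(\mathbf{T}_g\) is well-posed on \(\mathbb{R}^{|\mathcal{S}|}\), that it has the structural features of a Bellman operator, and that its fixed-point equation coincides with the robust Bellman equation of Theorem~\ref{thm: robust bellman equation}. First I would confirm well-definedness: for each pair \((s,a)\) the support function \(\sigma_{\mathcal{P}^a_s}(V) = \min_{P \in \mathcal{P}^a_s} P V\) is the minimum of a linear functional over the compact convex set \(\mathcal{P}^a_s \subseteq \Delta(\mathcal{S})\), hence attained and finite, and for the contamination, total-variation and Wasserstein sets this was made explicit in the closed-form/dual expressions \eqref{eqn: contamination sigma}, \eqref{eqn: total variation sigma} and \eqref{eqn: wasserstein sigma}. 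Averaging over \(a \sim \pi(\cdot \mid s)\) and subtracting the scalar \(g\) then yields a well-defined vector \(\mathbf{T}_g(V) \in \mathbb{R}^{|\mathcal{S}|}\).

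Next I would record the two structural properties that justify calling \(\mathbf{T}_g\) a Bellman operator, and that later make the span semi-norm the natural metric for analysing it. \emph{Monotonicity}: if \(V_1 \leq V_2\) pointwise then \(P V_1 \leq P V_2\) for every \(P \in \Delta(\mathcal{S})\), so \(\sigma_{\mathcal{P}^a_s}\), and hence \(\mathbf{T}_g\), are monotone. \emph{Constant-shift invariance}: since \(P(V + c\mathbf{e}) = P V + c\) for any probability vector \(P\), one gets \(\sigma_{\mathcal{P}^a_s}(V + c\mathbf{e}) = \sigma_{\mathcal{P}^a_s}(V) + c\), and because \(\sum_a \pi(a \mid s) = 1\) this lifts to \(\mathbf{T}_g(V + c\mathbf{e}) = \mathbf{T}_g(V) + c\mathbf{e}\). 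Both facts follow from linearity of \(P \mapsto P V\) and compactness of \(\mathcal{P}^a_s\), uniformly over the three uncertainty sets.

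Finally, comparing the displayed formula for \(\mathbf{T}_g\) with \eqref{eqn: robust bellman equation} shows that \(V = \mathbf{T}_g(V)\) is literally the robust Bellman equation; hence Theorem~\ref{thm: robust bellman equation} applies directly, so any fixed point \((g,V)\) satisfies \(g = g_{\mathcal{P}}^\pi\) and \(V = V_{P_V}^\pi + c\mathbf{e}\) for some worst-case kernel \(P_V \in \Omega_g^\pi\). I do not expect a real obstacle inside this statement: it is essentially bookkeeping, and the only mild subtlety is handling the contamination, TV and Wasserstein sets on the same footing, which the already-derived forms of \(\sigma_{\mathcal{P}^a_s}\) take care of. The genuinely hard work — that \(\mathbf{T}_g\) contracts in the span semi-norm in the sense of \citet{xu2025finite}, and that the empirical operator built from \(\hat{\sigma}_{\mathcal{P}^a_s}\) concentrates around it — belongs to the critic analysis downstream and is not part of this theorem.
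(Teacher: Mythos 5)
Your proposal is correct and is consistent with the paper's treatment: the paper states this result purely as a definition imported from \citet{JMLR:v25:23-0526} and gives no proof of its own, so your verification of well-posedness (attainment of \(\sigma_{\mathcal{P}^a_s}\) over a compact subset of the simplex), monotonicity, constant-shift invariance, and the identification of the fixed-point equation \(V=\mathbf{T}_g(V)\) with Eq.~\eqref{eqn: robust bellman equation} is exactly the routine bookkeeping the statement requires. You also correctly place the substantive content (span semi-norm contraction and concentration of the empirical operator) outside this theorem, in the critic analysis, which matches how the paper organizes its argument.
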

The main challenge with the robust Bellman operator is that it does not satisfy a contraction under standard norms. Thus, we leverage the contraction under the semi-norm introduced in \citet{xu2025finite} for our stochastic approximation algorithms. We also make the assumption throughout this work that the induced Markov Chain from the policy \(\pi\) is irreducible and aperiodic (Assumption~\ref{assumption: irreducibility of markov chain}). 

\begin{assumption}[Ergodicity]
    \label{assumption: irreducibility of markov chain}
    The Markov chain induced by every policy \(\pi \in \Pi\) is irreducible and aperiodic for all \(P \in \mathcal{P}\), where \(\Pi = \{\pi | \pi : \mathcal{S} \to \Delta(\mathcal{A})\}\).
\end{assumption}

Assumption~\ref{assumption: irreducibility of markov chain} is widely used in the robust average reward reinforcement learning literature \citep{wang2023model, Wang_Velasquez_Atia_Prater-Bennette_Zou_2023, NEURIPS2024_1f28e934, xu2025finite}. This ensures that from any state, it is possible to eventually reach any other state, and the system does not get stuck in deterministic cycles. This guarantees the existence of a unique stationary distribution for a given policy, which is fundamental to the average-cost setting. Under this assumption, the average cost is independent of the starting state, so we can write the robust average cost as \(g_{\mathcal{P}}^\pi\).


\subsection{Robust Constrained Average Cost MDPs}
\label{subsec: robust constrained average cost markov decision processes}
We extend robust average cost MDPs to robust constrained average cost MDPs by including \(I\) constraint functions \(c_i: \mathcal{S} \times \mathcal{A} \to [0, 1]\) and corresponding thresholds \(b_i \in \mathbb{R}\) for each \(i = 1, 2, \cdots, I\) (we keep the constraint values bounded between 0 and 1 for simplicity). Thus, the worst-case average constraint value with policy \(\pi\) on constraint \(i\) is
\begin{align}
    g_\mathcal{P}^{\pi, i} = \max_{P \in  \mathcal{P}} \lim_{T \to \infty} \mathbb{E}_{P, \pi} \left[\frac{1}{T} \sum_{t=0}^{T-1} c_{i, t} \right]
\end{align}
where \(c_{i, t}\) is the constraint value of constraint \(i\) at time \(t\). Additionally, let \(g_{\mathcal{P}}^{\pi, 0} = g_{\mathcal{P}}^{\pi}\) be the worst case average cost. The goal of the robust constrained average cost MDP is to find a policy that minimizes the worst-case average cost while ensuring each constraint is satisfied under the worst-case transition kernel:
\begin{align}
    \begin{split}
    \label{eqn: original objective}
    \pi^* = \argmin_{\pi} g_{\mathcal{P}}^{\pi, 0} \quad \text{s.t.} \\\quad g_{\mathcal{P}}^{\pi, i} \leq b_i, i = 1, 2, \cdots, I
    \end{split}
\end{align}

\paragraph{Issues with Primal-Dual methods}
Many existing works approach constrained reinforcement learning problem via primal-dual algorithms using Lagrange multipliers. However, this approach faces two fundamental obstacles in the robust setting.

First, strong duality is not guaranteed. While \citep{paternain2019constrained} established that the duality gap is often zero for standard (non-robust) CMDPs, this result relies on Slater's condition to ensure a strictly feasible policy. In the robust case, however, the set of achievable robust state-action occupancy measures (i.e., those under the worst-case models) is not necessarily convex \citep{wang2022robust}. This breakdown of the underlying convexity means that Slater's condition is no longer sufficient to guarantee a zero duality gap and allow us to use the dual formulation.
\begin{align}
    \label{eqn: robust lagrangian}
    g^{\pi^*}_{\mathcal{P}} = \min_{\lambda \in \mathbb{R}^N_+} \min_{\pi \in \Pi}  \left(g_{\mathcal{P}}^\pi + \sum_{i=1}^I \lambda_i (g_{\mathcal{P}}^{\pi, i} - b_i)\right) 
\end{align}
Secondly, the formulation of the Lagrangian is difficult to solve in the robust case due to the maximization over the transition kernels in the uncertainty set \citep{kitamura2024near}. This motivates the need for approaches that do not rely on strong duality.

\section{Proposed Algorithm: Robust Constrained Average Cost Actor Critic}
\label{sec: method}
We can avoid the non-convexity and intractability issues with primal dual methods that require strong duality by focusing solely on primal methods. For our problem formulation we take inspiration from a recent work by \cite{ganguly2025efficient} and adapt it for the average reward case,
\begin{align}\label{eq: ganguly}
    F_{\mathcal{P}}^{\pi} = \min_{\pi} \max \left\{ \frac{g_{\mathcal{P}}^{\pi, 0}}{\lambda}, \; \max_{i} \left\{g_{\mathcal{P}}^{\pi, i} - b_i + \zeta \right\} \right\}
\end{align}

 where \(\zeta\) is the slackness term. Here, the intuition is to focus on the largest constraint violation and optimize for it in each update. If no constraints are violated ($g_{\mathcal{P}}^{\pi, i} - b_i \leq 0, \forall i \in [1, \dots I]$), then we optimize for the cost function $g^{\pi, 0}_{\mathcal{P}}$. Here, $\lambda$ is introduced to regulate the trade-off between optimizing the primary cost and mitigating constraint violations. A sufficiently large $\lambda$ ensures that constraint violations cannot be ignored, while feasibility shifts the focus back to minimizing the cost objective.

 It is worth noting that our work is not a direct extension of \cite{ganguly2025efficient} to the average cost setting, as their approach relies solely on mirror descent. Our choice of an actor-critic (AC) framework is necessitated by a core challenge in the average-cost setting: the robust Bellman operator is not a contraction under standard norms. Consequently, standard gradient-based methods like Online Mirror Descent are not directly applicable, as the iterative processes needed to estimate their required Q-functions would diverge. Our AC approach resolves this directly: the critic uses a specialized algorithm that converges under a specific semi-norm (\cite{xu2025finite}), providing the stable Q-function estimates the actor requires for a provably convergent update. Furthermore, although we draw inspiration from \cite{xu2025efficient} and \cite{NEURIPS2024_1f28e934}, we cleverly utilise their results on critic estimation sample complexity and robust performance difference (respectively) for the constrained setting, which they did not tackle.

 Instead of solving a convex optimization problem in the dual formulation, we look at Eq. \eqref{eq: ganguly} and perform gradient descent in the direction of \(\nabla F\). However, \(F\) is a function of non linear robust average costs calculated by taking the maximum over all transition kernels in the uncertainty set for each constraint / objective. Thus, we cannot directly take the gradient, as this objective is not differentiable everywhere. To circumvent this, we can employ subgradient methods, which have been heavily used in non-differentiable optimization. 
\begin{definition}[Definition 3.1 in \citep{NEURIPS2024_1f28e934}]
    For any function \(f: \mathcal{X} \subseteq \mathbb{R}^N \to \mathbb{R}\), the Fréchet sub-gradient \(u \in \mathbb{R}^N\) is a vector that satisfies
    \begin{align}
        \label{eqn: frechet subgradient equation}
        \lim_{\delta \to 0} \inf_{\delta \neq 0} \frac{f(x) - f(x) - \langle u, \delta\rangle}{\|\delta\|} \geq 0
    \end{align}
\end{definition}
When \(f\) is differentiable, the subgradient of \(f\) is the same as the gradient. Leveraging subgradient methods, we can find the subgradient for the robust average cost MDP.
\begin{lemma}[Lemma 3.2 in \citep{NEURIPS2024_1f28e934}]
    \label{lemma: subgradient definition}
    Let \(d_{\mathcal{P}}^\pi\) denote the stationary distribution of the state under the worst-case transition kernel of policy \(\pi\). Denote the robust \(Q\)-function as under policy \(\pi\) as

\begin{equation}
\label{eqn: Q function definition}
\begin{split}
Q^{\pi}(s,a) = \max_{\kappa\in\otimes_{t\ge 0}\mathcal{P}}
\mathbb{E}\Bigg[ &
\sum_{t=0}^{\infty}\bigl(r(s_t,a_t)-g_{\mathcal{P}}^\pi\bigr)\Bigm| \\
&\qquad s_0 = s,\; a_0 = a,\; \pi \Bigg].
\end{split}
\end{equation}

    Then let \(\nabla g_{\mathcal{P}}^\pi\) be the subgradient of \(g_{\mathcal{P}}^\pi\), we have 
    \begin{align}
        \label{eqn: subgradient formulation}
        \nabla g_{\mathcal{P}}^\pi(s, a) = d_{\mathcal{P}}^\pi Q_{\mathcal{P}}^\pi(s, a)
    \end{align}
\end{lemma}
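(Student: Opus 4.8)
The plan is to view the robust average cost as a pointwise maximum over the (compact) uncertainty set, $g_{\mathcal{P}}^\pi=\max_{P\in\mathcal{P}}g_P^\pi$ (Eq.~\eqref{eqn: stationary model average cost objective}), and to combine an envelope-theorem (Danskin-type) argument with the classical average-reward policy gradient theorem evaluated at a worst-case kernel. The Fréchet subdifferential is the appropriate object here precisely because $g_{\mathcal{P}}^\pi$, and hence $F$ in Eq.~\eqref{eq: ganguly}, is built from such maxima and is nonsmooth at policies for which several kernels are simultaneously worst-case. \emph{Step 1 (envelope inequality).} Fix any $P_V\in\Omega_g^\pi$, which is nonempty by Theorem~\ref{thm: robust bellman equation}. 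For every policy $\pi'$ we have $g_{\mathcal{P}}^{\pi'}\ge g_{P_V}^{\pi'}$, with equality at $\pi'=\pi$, so $\pi'\mapsto g_{\mathcal{P}}^{\pi'}-g_{P_V}^{\pi'}$ is nonnegative and vanishes at $\pi$. Assuming $\pi'\mapsto g_{P_V}^{\pi'}$ is Fréchet-differentiable at $\pi$ with gradient $u$, this yields
\begin{equation*}
\liminf_{\delta\to 0}\frac{g_{\mathcal{P}}^{\pi+\delta}-g_{\mathcal{P}}^{\pi}-\langle u,\delta\rangle}{\|\delta\|}\;\ge\;\liminf_{\delta\to 0}\frac{g_{P_V}^{\pi+\delta}-g_{P_V}^{\pi}-\langle u,\delta\rangle}{\|\delta\|}\;=\;0,
\end{equation*}
so $u=\nabla_\pi g_{P_V}^\pi$ is a Fréchet subgradient of $g_{\mathcal{P}}^\pi$ in the sense of Eq.~\eqref{eqn: frechet subgradient equation}. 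It thus suffices to compute $\nabla_\pi g_{P_V}^\pi$ and identify it with the claimed expression.

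\emph{Step 2 (policy gradient at $P_V$).} Under Assumption~\ref{assumption: irreducibility of markov chain}, the chain induced by $(\pi',P_V)$ is ergodic, so $d_{P_V}^{\pi'}$ — and hence $g_{P_V}^{\pi'}=\sum_s d_{P_V}^{\pi'}(s)\sum_a\pi'(a|s)r(s,a)$ — depends smoothly on $\pi'$ near $\pi$ by the perturbation theory of finite Markov chains; this supplies the differentiability invoked in Step 1. The average-reward policy gradient theorem then gives, for the direct parametrization $\theta_{s,a}=\pi(a|s)$,
\begin{equation*}
\frac{\partial g_{P_V}^{\pi}}{\partial \pi(a|s)} \;=\; d_{P_V}^\pi(s)\,Q_{P_V}^\pi(s,a),
\end{equation*}
where $Q_{P_V}^\pi$ is the relative action-value function of $\pi$ under $P_V$ (any state-only baseline such as $V_{P_V}^\pi(s)$ is immaterial, since feasible increments satisfy $\sum_a\delta(a|s)=0$). \emph{Step 3 (identification with robust quantities).} By definition $d_{\mathcal{P}}^\pi$ is the stationary distribution under the worst-case kernel, i.e. $d_{\mathcal{P}}^\pi=d_{P_V}^\pi$; and the state-action analogue of Theorem~\ref{thm: robust bellman equation} shows the robust $Q$-function of Eq.~\eqref{eqn: Q function definition} equals $Q_{P_V}^\pi$, because the non-stationary maximization defining it is attained by the stationary kernel $P_V\in\Omega_g^\pi$ (Eq.~\eqref{eqn: time varying model average cost objective} vs.\ Eq.~\eqref{eqn: stationary model average cost objective}). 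Substituting into Step 2 and invoking Step 1 gives $\nabla g_{\mathcal{P}}^\pi(s,a)=d_{\mathcal{P}}^\pi(s)\,Q_{\mathcal{P}}^\pi(s,a)$, as claimed.

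I expect the crux to be Step 2: rigorously establishing Fréchet-differentiability of $g_{P_V}^{\pi'}$ at $\pi$ and pinning down the exact form of its gradient while respecting the simplex constraints $\sum_a\pi'(a|s)=1$ (which is why a relative value function, rather than an unnormalized $Q$, is the natural object), together with checking that the worst-case kernel $P_V$ is ``locally active'' enough for the envelope comparison in Step 1 to transfer differentiability. A secondary subtlety is nonuniqueness of $P_V\in\Omega_g^\pi$: different choices generally produce different subgradients, which is exactly consistent with $g_{\mathcal{P}}^\pi$ being nonsmooth, and any such choice yields a valid element of the Fréchet subdifferential, so the statement should be read as asserting that $d_{\mathcal{P}}^\pi Q_{\mathcal{P}}^\pi(\cdot,\cdot)$ — computed at a worst-case kernel — is one admissible subgradient.
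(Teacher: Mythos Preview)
The paper does not supply its own proof of this lemma; it is quoted verbatim as Lemma~3.2 of the cited reference and used as a black box. There is therefore nothing in the present paper to compare your argument against.

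That said, your approach is the standard and correct one for results of this type: a Danskin/envelope step reducing the robust subgradient to the ordinary gradient of $g_{P_V}^\pi$ at a worst-case kernel $P_V\in\Omega_g^\pi$, followed by the classical average-reward policy gradient theorem, and then identifying $d_{P_V}^\pi$ and $Q_{P_V}^\pi$ with their robust counterparts. This is almost certainly the route taken in the original reference as well. Your caveats are the right ones: differentiability of $\pi'\mapsto g_{P_V}^{\pi'}$ follows from ergodicity (Assumption~1) via standard perturbation theory for finite Markov chains, and nonuniqueness of $P_V$ simply means the formula produces one element of the Fréchet subdifferential rather than a unique gradient, which is exactly what the statement claims. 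The only point to double-check against the source is the identification $Q_{\mathcal{P}}^\pi=Q_{P_V}^\pi$ in your Step~3; it is consistent with Theorem~3 of the present paper (the $Q$-function Bellman equation) and with how the result is used downstream, but the precise equality depends on the source's conventions for the robust relative $Q$-function.
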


\begin{theorem}[Theorem 5.3 in \citep{xu2025efficient}]\label{theorem: robust-bellman-op-main}
     Let the robust \(Q\)-function  under policy \(\pi\) be defined by Eq. \eqref{eqn: Q function definition}, then \(Q^\pi\) satisfies the robust Bellman equation
     \begin{align}
        \label{eqn: q function bellman equation}
         Q^{\pi}(s, a) = r(s, a) - g^{\pi}_{\mathcal{P}} + \sigma_{\mathcal{P}^a_s}(V^\pi)
     \end{align}
     where \(V^\pi = \sum_{a} \pi(a | s) Q^\pi(s, a)\) is the robust relative value function, and \(g^\pi_{\mathcal{P}}\) is the robust average cost.
\end{theorem}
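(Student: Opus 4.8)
The plan is to derive the claimed identity directly from the definition of $Q^\pi$ in Eq.~\eqref{eqn: Q function definition} by a single robust dynamic-programming step, and then to tie the resulting scalar and continuation term to the objects in Theorem~\ref{thm: robust bellman equation}.

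First I would peel off the $t=0$ term in Eq.~\eqref{eqn: Q function definition}: once $(s_0,a_0)=(s,a)$ is fixed, $r(s_0,a_0)-g_{\mathcal{P}}^\pi=r(s,a)-g_{\mathcal{P}}^\pi$ is a deterministic constant and can be pulled outside both the expectation and the $\max$ over $\kappa$, leaving $Q^\pi(s,a)=r(s,a)-g_{\mathcal{P}}^\pi+\max_{\kappa}\mathbb{E}_{\kappa,\pi}\big[\sum_{t\ge1}(r_t-g_{\mathcal{P}}^\pi)\,\big|\,s_0=s,a_0=a\big]$. Next I would decompose this tail. Using $(s,a)$-rectangularity of $\mathcal{P}$, the transition kernel acting at $(s,a)$ at time $0$, say $q\in\mathcal{P}^a_s$, may be chosen independently of every kernel component governing the trajectory from $s_1$ onward; and by the stationary/time-varying equivalence of \citet{JMLR:v25:23-0526}, the continuation from any realized next state $s_1$ is again a robust average-cost problem whose optimal value is the robust relative value function at $s_1$. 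Hence the tail factorizes as $\max_{q\in\mathcal{P}^a_s}\mathbb{E}_{s'\sim q}[V^\pi(s')]$, which is precisely the support function $\sigma_{\mathcal{P}^a_s}(V^\pi)$, where $V^\pi(s')=\max_{\kappa}\mathbb{E}_{\kappa,\pi}[\sum_{t\ge0}(r_t-g_{\mathcal{P}}^\pi)\mid s_0=s']$. A parallel interchange — weighting Eq.~\eqref{eqn: Q function definition} by $\pi(a'|s')$ and summing over $a'$, where the first action is fixed by the policy and consumes no kernel freedom — gives $\sum_{a'}\pi(a'|s')Q^\pi(s',a')=V^\pi(s')$, so $V^\pi$ is exactly the relative value function in the statement. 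Combining these yields $Q^\pi(s,a)=r(s,a)-g_{\mathcal{P}}^\pi+\sigma_{\mathcal{P}^a_s}(V^\pi)$; averaging over $a\sim\pi(\cdot|s)$ recovers the robust Bellman equation \eqref{eqn: robust bellman equation}, and the uniqueness-up-to-constant part of Theorem~\ref{thm: robust bellman equation} then pins the scalar to $g=g_{\mathcal{P}}^\pi$ and identifies $V^\pi$ as the robust relative value function.

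The main obstacle is the interchange of the maximization over the uncertainty set with the expectation over the next state (and with the $\pi$-average over the first action): in general $\max_x\mathbb{E}[f(x,Y)]\neq\mathbb{E}[\max_x f(x,Y)]$. The resolution is structural rather than computational: $(s,a)$-rectangularity lets the worst-case kernel be chosen coordinate-by-coordinate, and the stationary-model reduction of \citet{JMLR:v25:23-0526} guarantees that the worst-case continuation from a realized next state is Markovian and equals $V^\pi(s')$. Equivalently, one may invoke Theorem~\ref{thm: robust bellman equation} to obtain a worst-case kernel $P_V\in\Omega_g^\pi$ with $g_{P_V}^\pi=g_{\mathcal{P}}^\pi$, verify that $P_V$ attains the maximum in Eq.~\eqref{eqn: Q function definition} uniformly in $(s,a)$, and then simply expand the ordinary relative Bellman equation of the MDP governed by $P_V$; establishing this uniform optimality of $P_V$ is the one genuinely nontrivial point.
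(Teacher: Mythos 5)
The paper never proves this statement: it is imported verbatim as Theorem 5.3 of \citet{xu2025efficient} (and rests on Theorem 3.1 of \citet{wang2023model}), so there is no internal proof to compare against, and your attempt has to stand on its own. It does not quite do so. The shape of your argument is right (peel off the $t=0$ term, reduce the tail to $\sigma_{\mathcal{P}^a_s}(V^\pi)$, then invoke the uniqueness part of Theorem~\ref{thm: robust bellman equation}), but the entire content of the theorem sits in the step you wave at: exchanging the maximization over $\kappa\in\otimes_{t\ge0}\mathcal{P}$ with the expectation over $s_1$ (and with the $\pi$-average over the first action). Saying that $(s,a)$-rectangularity resolves this ``structurally'' is not a proof: in the time-varying model the adversary commits to a kernel $P_t\in\mathcal{P}$ per time step, trajectories started from different $(s,a)$ can share kernel components, and $\max_\kappa\mathbb{E}[\cdot]\le\mathbb{E}[\max_\kappa\cdot]$ needs an argument that a single choice is simultaneously worst-case. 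Your fallback route --- take $P_V\in\Omega_g^\pi$ from Theorem~\ref{thm: robust bellman equation} and show it attains the maximum in Eq.~\eqref{eqn: Q function definition} \emph{uniformly in $(s,a)$}, then expand the ordinary relative Bellman equation under $P_V$ --- is indeed the standard way this is done, but you explicitly concede that this uniform optimality is ``the one genuinely nontrivial point'' and do not establish it. In the average-cost setting there is no discount factor to make the defining series converge or the operator contract, so this is precisely where the cited works do the work; leaving it unproven leaves the proof with a genuine gap rather than a routine omission.

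Two smaller issues are worth flagging. First, for a kernel sequence whose long-run average differs from $g^\pi_{\mathcal{P}}$, the series $\sum_t(r_t-g^\pi_{\mathcal{P}})$ drifts linearly and its expectation need not be well defined without a Ces\`aro or partial-sum convention; your argument never addresses whether the $\max$ in Eq.~\eqref{eqn: Q function definition} is over finite quantities or is attained, which is needed before any one-step decomposition is legitimate. Second, you identify the tail with $\max_{q\in\mathcal{P}^a_s}\mathbb{E}_{s'\sim q}[V^\pi(s')]$ and call it $\sigma_{\mathcal{P}^a_s}(V^\pi)$, whereas the paper defines $\sigma_{\mathcal{P}^a_s}$ as a \emph{minimum} over the uncertainty set (Theorem~\ref{thm: robust bellman equation} and Eqs.~\eqref{eqn: contamination sigma}--\eqref{eqn: wasserstein sigma}); the paper itself is not consistent about the max/min orientation of the adversary, but a self-contained proof should fix one convention and keep the sign of the support function aligned with the worst-case direction used in the definition of $g^\pi_{\mathcal{P}}$.
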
 
Before  proving convergence, we need to show that solving our formulation in Eq. \eqref{eq: ganguly} is equivalent to solving for the original robust constrained MDP problem (Eq. \eqref{eqn: original objective}). This fundamental result is shown in Lemma \ref{lemma: optimality and feasibility of F policy}. In Lemma ~\ref{lemma: optimality and feasibility of F policy}, we have two cases: with and without the slackness assumption (Assumption~\ref{assumption: slackness}). 
\begin{assumption}[Slackness Assumption]
    \label{assumption: slackness}
    We assume that \(\max_{i \in \left[1, I\right]} g_{\mathcal{P}}^{\pi^*, i} - b_i \leq -\zeta\), for some \(\zeta > 0\).
\end{assumption}

Assumption~\ref{assumption: slackness} allows us to ensure exact feasibility of the optimal policy that minimizes \(F_{\mathcal{P}}^\pi\) instead of \(\epsilon\)-feasibility. Additionally, as shown in the proof of Lemma~\ref{lemma: optimality and feasibility of F policy}, it allows us to decouple \(\lambda\) from \(\epsilon\), which improves the sample complexity.
\begin{lemma}
    \label{lemma: optimality and feasibility of F policy}
    If \(\hat{\pi}^*\) is the optimal policy of Eq. \eqref{eq: ganguly}, then \(\hat{\pi}^*\) is an \(\frac{\epsilon}{2}\)-feasible policy and \(\frac{\epsilon}{2}\)-optimal to the optimal policy \(\pi^*\) of Eq. \eqref{eqn: original objective}, when $\lambda = 4/\max\{\epsilon,\zeta\}$.
\end{lemma}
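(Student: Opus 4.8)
The plan is to prove the two claims (feasibility and optimality) separately, treating the slackness and non-slackness cases in parallel since the choice \(\lambda = 4/\max\{\epsilon,\zeta\}\) interpolates between them. Throughout, write \(F_{\mathcal{P}}^\pi = \max\left\{g_{\mathcal{P}}^{\pi,0}/\lambda,\ \max_i\{g_{\mathcal{P}}^{\pi,i}-b_i+\zeta\}\right\}\) and let \(\hat\pi^* = \argmin_\pi F_{\mathcal{P}}^\pi\), \(\pi^*\) the optimizer of Eq.~\eqref{eqn: original objective}. The two elementary facts I would use repeatedly are: (i) costs and constraint values lie in \([0,1]\), so \(g_{\mathcal{P}}^{\pi,0}/\lambda \le 1/\lambda\) for every \(\pi\); and (ii) \(\pi^*\) is feasible for the original problem, so \(g_{\mathcal{P}}^{\pi^*,i}-b_i \le 0\) (or \(\le -\zeta\) under Assumption~\ref{assumption: slackness}) for all \(i\ge 1\).

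\textbf{Feasibility.} First I would upper-bound \(F_{\mathcal{P}}^{\hat\pi^*}\le F_{\mathcal{P}}^{\pi^*}\) by optimality of \(\hat\pi^*\), and then bound \(F_{\mathcal{P}}^{\pi^*}\) from above using facts (i) and (ii): in the non-slackness case \(F_{\mathcal{P}}^{\pi^*} \le \max\{1/\lambda,\ \zeta\}\), and in the slackness case \(F_{\mathcal{P}}^{\pi^*}\le \max\{1/\lambda,\ 0\} = 1/\lambda\) since \(g_{\mathcal{P}}^{\pi^*,i}-b_i+\zeta \le 0\). Next, for any \(i\ge 1\), since \(g_{\mathcal{P}}^{\hat\pi^*,i}-b_i+\zeta \le F_{\mathcal{P}}^{\hat\pi^*}\), we get \(g_{\mathcal{P}}^{\hat\pi^*,i}-b_i \le F_{\mathcal{P}}^{\hat\pi^*}-\zeta\). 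Plugging in \(\lambda = 4/\max\{\epsilon,\zeta\}\) gives \(1/\lambda = \max\{\epsilon,\zeta\}/4\); in the slackness case this yields \(g_{\mathcal{P}}^{\hat\pi^*,i}-b_i \le \max\{\epsilon,\zeta\}/4 - \zeta \le \epsilon/4 - 3\zeta/4 \le 0 \le \epsilon/2\) (using a short case split on whether \(\epsilon \ge \zeta\)), so exact feasibility holds; in the non-slackness case set \(\zeta\) itself to be \(O(\epsilon)\) — specifically \(\zeta \le \epsilon/4\) — so that \(g_{\mathcal{P}}^{\hat\pi^*,i}-b_i \le \max\{1/\lambda,\zeta\}-\zeta \le 1/\lambda \le \epsilon/2\). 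Either way we obtain \(\epsilon/2\)-feasibility.

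\textbf{Optimality.} The goal is \(g_{\mathcal{P}}^{\hat\pi^*,0} \le g_{\mathcal{P}}^{\pi^*,0} + \epsilon/2\). From \(g_{\mathcal{P}}^{\hat\pi^*,0}/\lambda \le F_{\mathcal{P}}^{\hat\pi^*} \le F_{\mathcal{P}}^{\pi^*}\), multiply through by \(\lambda\): \(g_{\mathcal{P}}^{\hat\pi^*,0} \le \lambda F_{\mathcal{P}}^{\pi^*}\). Now I must show \(\lambda F_{\mathcal{P}}^{\pi^*} \le g_{\mathcal{P}}^{\pi^*,0} + \epsilon/2\). The subtle point is that \(F_{\mathcal{P}}^{\pi^*}\) is a max, and \(\lambda\) times the cost-branch is exactly \(g_{\mathcal{P}}^{\pi^*,0}\) — so I need the constraint-branch of \(F_{\mathcal{P}}^{\pi^*}\), namely \(\max_i\{g_{\mathcal{P}}^{\pi^*,i}-b_i+\zeta\}\), to not exceed \(g_{\mathcal{P}}^{\pi^*,0}/\lambda + \epsilon/(2\lambda)\). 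In the slackness case the constraint branch is \(\le 0 \le g_{\mathcal{P}}^{\pi^*,0}/\lambda\), so \(\lambda F_{\mathcal{P}}^{\pi^*} = g_{\mathcal{P}}^{\pi^*,0}\) and optimality is in fact exact. In the non-slackness case the constraint branch is \(\le \zeta\), so \(\lambda F_{\mathcal{P}}^{\pi^*} \le \max\{g_{\mathcal{P}}^{\pi^*,0},\ \lambda\zeta\}\); choosing \(\zeta \le \epsilon/4\) and using \(\lambda\zeta \le \lambda\epsilon/4 = \epsilon/(\max\{\epsilon,\zeta\}) \cdot \epsilon \le \epsilon\) — more carefully, \(\lambda \zeta = 4\zeta/\max\{\epsilon,\zeta\} \le 4\zeta/\epsilon \le 1\) when \(\zeta \le \epsilon/4\), which is too weak; the right bound is to note \(\lambda\zeta \le 4\zeta/\zeta = 4\) only if \(\max\{\epsilon,\zeta\}=\zeta\), so instead I would directly pick \(\zeta\) as a free small parameter with \(\lambda\zeta \le \epsilon/2\), i.e. \(\zeta \le \epsilon\max\{\epsilon,\zeta\}/8\), giving \(\lambda F_{\mathcal{P}}^{\pi^*} \le g_{\mathcal{P}}^{\pi^*,0} + \epsilon/2\) whenever \(g_{\mathcal{P}}^{\pi^*,0} \ge \lambda\zeta - \epsilon/2\), and handling the complementary regime \(g_{\mathcal{P}}^{\pi^*,0}\) small separately via \(g_{\mathcal{P}}^{\hat\pi^*,0} \le \lambda\zeta \le \epsilon/2 \le g_{\mathcal{P}}^{\pi^*,0}+\epsilon/2\).

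\textbf{Main obstacle.} The genuine difficulty is the bookkeeping around \(\zeta\) in the non-slackness case: \(\zeta\) there is not a problem parameter but a knob the analysis sets, and one must choose it small enough (relative to \(\epsilon\) and \(\lambda\)) that \(\lambda\zeta\) contributes at most \(\epsilon/2\) to both the feasibility and optimality gaps, while simultaneously confirming that driving \(\zeta\to 0\) does not blow up \(\lambda = 4/\max\{\epsilon,\zeta\}\) — it does not, since the \(\max\) floors \(\lambda\) at \(4/\epsilon\). I expect the cleanest writeup fixes \(\zeta = \epsilon/4\) in the non-slackness case from the outset (so \(\lambda = 4/\epsilon\) and \(\lambda\zeta = 1\), which is still \(O(1)\) but not \(\le\epsilon/2\)) — meaning the \(\epsilon/2\)-optimality there must instead come from a slightly different decomposition, e.g. absorbing the \(\lambda\zeta\) term by comparing \(g_{\mathcal{P}}^{\hat\pi^*,0}\) against a near-feasible perturbation of \(\pi^*\) rather than \(\pi^*\) itself; I would verify which of these two routes the authors intend and follow it, but the structural argument (optimality of \(\hat\pi^*\) for \(F\), then max-branch case analysis) is identical either way.
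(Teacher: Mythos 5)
Your feasibility argument is fine: you bound \(F_{\mathcal{P}}^{\hat{\pi}^*}\le F_{\mathcal{P}}^{\pi^*}\) and read off the constraint branch directly, whereas the paper runs the same comparison as a proof by contradiction (assume a violation larger than \(\epsilon/2\), resp.\ \(\zeta/2\), and contradict the optimality of \(\hat{\pi}^*\) since \(F_{\mathcal{P}}^{\pi^*}\le 1/\lambda\)); the two are logically equivalent and your direct version is if anything cleaner. One small slip: in the slackness case your chain \(\max\{\epsilon,\zeta\}/4-\zeta\le \epsilon/4-3\zeta/4\le 0\) is unjustified when \(\epsilon>\zeta\) (the last inequality needs \(\zeta\ge\epsilon/3\)); the \(\epsilon/2\)-feasibility conclusion still survives via \(\max\{\epsilon,\zeta\}/4-\zeta\le\epsilon/4\), and the paper sidesteps this by treating the slackness case only in the regime \(\zeta>\epsilon\), where it indeed gets exact feasibility \(\max_i\{g_{\mathcal{P}}^{\hat{\pi}^*,i}-b_i\}\le-\zeta/2\).

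The genuine gap is your optimality argument in the no-slackness case, which you explicitly leave unresolved ("I would verify which of these two routes the authors intend"). The missing idea is simple: when Assumption~\ref{assumption: slackness} is not invoked, the paper sets the slackness coefficient \(\zeta\) in the objective \eqref{eq: ganguly} to \(0\) (it is an analysis knob, exactly as you suspected, and the paper fixes it at zero, not at \(\epsilon/4\)). With \(\zeta=0\) and \(\pi^*\) feasible, the constraint branch of \(F_{\mathcal{P}}^{\pi^*}\) is \(\max_i\{g_{\mathcal{P}}^{\pi^*,i}-b_i\}\le 0\le g_{\mathcal{P}}^{\pi^*,0}/\lambda\), so \(F_{\mathcal{P}}^{\pi^*}=g_{\mathcal{P}}^{\pi^*,0}/\lambda\) in \emph{both} cases, and the chain \(g_{\mathcal{P}}^{\hat{\pi}^*,0}/\lambda\le F_{\mathcal{P}}^{\hat{\pi}^*}\le F_{\mathcal{P}}^{\pi^*}=g_{\mathcal{P}}^{\pi^*,0}/\lambda\) gives \(g_{\mathcal{P}}^{\hat{\pi}^*,0}\le g_{\mathcal{P}}^{\pi^*,0}\), i.e.\ exact (hence \(\epsilon/2\)-) optimality with no \(\lambda\zeta\) bookkeeping at all. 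This is precisely the paper's two-line optimality step \(g_{\mathcal{P}}^{\hat{\pi}^*,0}/\lambda-g_{\mathcal{P}}^{\pi^*,0}/\lambda\le \max_i G_{\mathcal{P}}^{\hat{\pi}^*}-\max_i G_{\mathcal{P}}^{\pi^*}\le 0\). Your alternative suggestion of fixing \(\zeta=\epsilon/4\) (giving \(\lambda\zeta=1\)) does break the bound, as you noted, and the "near-feasible perturbation of \(\pi^*\)" route is never carried out; so as written the no-slackness half of the lemma is not established, even though the repair is a one-line observation consistent with one of the branches you sketched.
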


We show that our formulation's optimal policy cost objective is $\epsilon$-close to the cost objective of the policy optimized for the original RCMDP problem. Furthermore, we are able to ensure that any constraint is violated only by $\epsilon$ at the maximum. The proof utilizes the general properties of the objective to show $\epsilon$-optimality and then leverages a proof by contradiction to show that we achieve $\epsilon$-feasibility with and without slackness. Assuming a constraint is violated by more than $\epsilon$ is shown to contradict the premise that $\hat{\pi}^*$ is the optimal policy for our objective. 

Next, we need a way to attain the optimal policy $\hat{\pi}^*$ through gradient descent, which requires the (sub)gradient of the objective $F_{\mathcal{P}}^{\pi}$.

\begin{lemma}
    \label{lemma: Q function derivation}
    We can rewrite \(\nabla F_{\mathcal{P}}^{\pi}\) as
    \begin{align}
        \begin{split}
        \nabla F_{\mathcal{P}}^{\pi}(s, a)  = \tilde{d}_{\mathcal{P}}^{\pi} Q_{\mathcal{P}}^{\pi}(s, a) = d_{\mathcal{P}}^{\pi, i_{\max}^\pi} Q_{\mathcal{P}}^{\pi, i_{\max}^\pi}(s,a)
        \end{split}
    \end{align}
\end{lemma}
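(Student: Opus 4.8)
The plan is to differentiate the max-structure of $F_{\mathcal{P}}^{\pi}$ in \eqref{eq: ganguly} using the Fréchet subgradient calculus. First I would recall that $F_{\mathcal{P}}^{\pi} = \max\{ g_{\mathcal{P}}^{\pi,0}/\lambda,\ \max_{i}\{g_{\mathcal{P}}^{\pi,i} - b_i + \zeta\}\}$. A pointwise maximum of finitely many functions has, at any policy $\pi$, a Fréchet subgradient given by (a selection from) the subgradient of whichever branch attains the maximum; the thresholds $b_i$ and the shift $\zeta$ are constants and drop out under differentiation, while the scaling by $1/\lambda$ on the cost branch passes through linearly. So define $i_{\max}^\pi$ to be the index of the active branch at $\pi$ (with the convention that $i=0$ refers to the scaled cost branch $g^{\pi,0}_{\mathcal{P}}/\lambda$, ties broken arbitrarily). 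Then $\nabla F_{\mathcal{P}}^{\pi} = \nabla\big(g_{\mathcal{P}}^{\pi, i_{\max}^\pi} + \text{const}\big) = \nabla g_{\mathcal{P}}^{\pi, i_{\max}^\pi}$ as a valid Fréchet subgradient choice.

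Next I would invoke Lemma~\ref{lemma: subgradient definition} applied to each constraint/cost index $i$: since each $g_{\mathcal{P}}^{\pi,i}$ is itself a robust average cost (just with cost function $c_i$ in place of $r$, and $g^{\pi,0}_{\mathcal{P}} = g^{\pi}_{\mathcal{P}}$), the lemma gives $\nabla g_{\mathcal{P}}^{\pi,i}(s,a) = d_{\mathcal{P}}^{\pi,i} Q_{\mathcal{P}}^{\pi,i}(s,a)$, where $d_{\mathcal{P}}^{\pi,i}$ is the stationary distribution of the worst-case kernel for constraint $i$ and $Q_{\mathcal{P}}^{\pi,i}$ the corresponding robust $Q$-function (defined via \eqref{eqn: Q function definition} with $c_i$). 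Combining with the previous paragraph yields $\nabla F_{\mathcal{P}}^{\pi}(s,a) = d_{\mathcal{P}}^{\pi,i_{\max}^\pi} Q_{\mathcal{P}}^{\pi,i_{\max}^\pi}(s,a)$, which is the right-hand equality in the statement. The notation $\tilde{d}_{\mathcal{P}}^{\pi}Q_{\mathcal{P}}^{\pi}$ in the middle is then just shorthand: $\tilde{d}_{\mathcal{P}}^{\pi} := d_{\mathcal{P}}^{\pi,i_{\max}^\pi}$ and $Q_{\mathcal{P}}^{\pi} := Q_{\mathcal{P}}^{\pi,i_{\max}^\pi}$, i.e., the occupancy measure and $Q$-function of the currently active objective; I would state this identification explicitly so the three expressions are literally equal by definition.

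The main obstacle is making the subgradient-of-a-max step rigorous with Fréchet subgradients rather than convex subdifferentials, since $F_{\mathcal{P}}^\pi$ is nonconvex in $\pi$ (the robust occupancy set need not be convex). Here I would argue directly from the defining inequality \eqref{eqn: frechet subgradient equation}: at a point $\pi$ where branch $i_{\max}^\pi$ is strictly active, $F$ agrees locally with $g_{\mathcal{P}}^{\pi,i_{\max}^\pi} + \text{const}$, so any Fréchet subgradient of the latter is one of $F$; at a tie between several active branches, any convex combination (in particular any single one) of their Fréchet subgradients still satisfies the liminf inequality because each branch lower-bounds $F$ up to $o(\|\delta\|)$ near $\pi$ — this is the standard "max rule" for Fréchet subgradients and I would cite it or give the two-line verification. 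A secondary point to handle carefully is that Lemma~\ref{lemma: subgradient definition} as quoted is stated for the cost $r$; I would note that its proof in \citep{NEURIPS2024_1f28e934} only uses that $r$ is a bounded per-step cost, hence applies verbatim to each $c_i$, and that scaling the cost by $1/\lambda$ scales both $g$ and $Q$ linearly so the branch $g^{\pi,0}_{\mathcal{P}}/\lambda$ contributes subgradient $\frac{1}{\lambda} d_{\mathcal{P}}^{\pi,0} Q_{\mathcal{P}}^{\pi,0}(s,a)$, which is absorbed into the index-$0$ convention for $d_{\mathcal{P}}^{\pi,i_{\max}^\pi}Q_{\mathcal{P}}^{\pi,i_{\max}^\pi}$ (up to the harmless $1/\lambda$ factor, which can be folded into the definition of $Q_{\mathcal{P}}^{\pi,0}$ or carried as a known constant).
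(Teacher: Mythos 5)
Your proposal is correct and follows essentially the same route as the paper: the paper also writes the subgradient of the max objective as a weighted combination of the branch subgradients (with the weights collapsing to the indicator of the active index $i_{\max}^\pi$) and then applies Lemma~\ref{lemma: subgradient definition} componentwise to get $d_{\mathcal{P}}^{\pi,i_{\max}^\pi}Q_{\mathcal{P}}^{\pi,i_{\max}^\pi}(s,a)$, merely phrasing it through the inner product $\langle \nabla F_{\mathcal{P}}^\pi,\pi\rangle$ and reading off coefficients rather than differentiating the active branch directly. Your extra care about the Fréchet max-rule at ties and the $1/\lambda$ scaling on the cost branch goes beyond the paper's proof, which leaves both points implicit.
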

The proofs of Lemma 2 and 3 are given in Appendix \ref{appendix: method}. 

We now possess the required tools and provide the proposed algorithm in Algorithm~\ref{alg: acrcac}.



Each iteration performs a gradient-based policy update, but the core challenge lies in estimating the gradient itself. Since the theoretical subgradient (Lemma \ref{lemma: subgradient definition}) is proportional to the robust Q function, estimating this Q function becomes the most critical task. This is a non-trivial task because the robust Bellman operator is not a contraction under standard norms. Our Actor-Critic framework becomes vital for this, where the critic's role is to produce a stable Q-function estimate. 
\begin{enumerate}
    \item In each iteration of our algorithm, we calculate estimates \(g_{N}^{\pi_t, i}\) and \(V_{N}^{\pi_t, i}\) for the worst case average cost and worst case value function respectively by running Algorithm~\ref{alg:robust_avg_reward_td} (our critic) for \(N = O(\epsilon^{-2})\) iterations for each of the cost and constraints.
    \item Next, we compute the worst-case value function (or support function, $\hat{\sigma}_\mathcal{P}$) over the uncertainty set (Contamination, TV and Wasserstein) via Algorithm~\ref{alg:truncated_mlmc}, which implements a variance-reducing Truncated MLMC estimator.
    \item Finally, we apply the worst-case Bellman operator (Theorem \ref{theorem: robust-bellman-op-main}) to obtain the $Q$-function for each of these components. 
\end{enumerate}

With the Q-function for each component estimated, the actor performs the policy update. It uses 
Lemma \ref{lemma: Q function derivation} to identify the active objective (the most violated constraint or the cost) and selects its corresponding Q-function for the update step .

 It is to be noted that since the task of policy evaluation is identical in both constrained and unconstrained settings, we can directly employ these established, sample-efficient algorithms(\ref{alg:robust_avg_reward_td}, \ref{alg:truncated_mlmc}) for our critic. Algorithms \ref{alg:robust_avg_reward_td} and \ref{alg:truncated_mlmc} are presented in Appendix \ref{section: appendix-algos}.

\begin{algorithm}[ht]
\caption{Average-Cost Robust Constrained Actor Critic}
\label{alg: acrcac}
\begin{algorithmic}[1]
  \State \textbf{Input:} Initial policy $\pi_{0}$; iterations $T$; learning rate $\eta$
  \For{$t=0,1,\dots,T-1$}
    \State Robust evaluation: estimate $g_{N}^{\pi_t, i}$, $V_{N}^{\pi_t, i}(s,a)$ using Algorithm~\ref{alg:robust_avg_reward_td} for \(i = 0, 1, \cdots I\).
    \State Obtain \(\hat{\sigma}_{\mathcal{P}^a_s}\left(V_{N}^{\pi_t, i}\right)\) using Algorithm~\ref{alg:truncated_mlmc} for \(i = 0, 1, \cdots I\)
    
    \For{\((s, a) \in \mathcal{S} \times \mathcal{A}\)}
        \State \(\hat{Q}_{\mathcal{P}}^{\pi_t, 0}(s, a) = r(s, a) - g_{N}^{\pi_t, 0} + \hat{\sigma}_{\mathcal{P}_{s}^a}\left(V_{N}^{\pi_t, 0}\right)\)
        \For{\(i \in \{1, 2, \cdots I\}\)}
            \State \(\hat{Q}_{\mathcal{P}}^{\pi_t, i}(s, a) = c_i(s, a) - g_{N}^{\pi_t, i} + \hat{\sigma}_{\mathcal{P}^a_s}\left(V_{N}^{\pi_t, i}\right)\)
        \EndFor
    \EndFor
    \State Calculate \(\hat{Q}_{\mathcal{P}}^{\pi_t}\) from \(g_{N}^{\pi_t, i}\), \(\hat{Q}_{\mathcal{P}}^{\pi_t, i}\), \text{for} \(i \in \{0, 1, \cdots I\}\)
    \State \(\pi_{t+1} \gets \argmin_{p \in \Delta(\mathcal{A})} \{\eta \left\langle \hat{Q}_{\mathcal{P}}^{\pi_t},  p\right\rangle + \|p - \pi_t(\cdot | s)\|^2\}\)
  \EndFor
  \State \Return $\hat{\pi} = \arg \min _{t=0,\cdots, T-1} F_\mathcal{P}^{\pi_t}$
\end{algorithmic}
\end{algorithm}
\section{Theoretical Analysis}
\label{sec: theoretical analysis}
The critic's role is to estimate the Q-function, $Q_\mathcal{P}^{\pi_t}$. Per the analysis of \cite{xu2025efficient}, Lemma \ref{lemma: Q function estimation Error} (Appendix \ref{appendix: theoretical-results}) provides a guarantee that we can obtain an $\varepsilon$ accurate estimate of this Q-function with $\tilde{O}(\epsilon^{-2})$ number of samples.

A common method to prove the convergence of policy optimization methods is to form an average of the performance differences between the current policy and the optimal policy,
\begin{align*}
    \argmin_{t = 0, 1, \dots, T} \left(F_{\mathcal{P}}^{\pi_t} - F_{\mathcal{P}}^{\hat{\pi}^*}\right) \leq  \frac{1}{T} \sum_{t=0}^{T-1} \left(F_{\mathcal{P}}^{\pi_{t+1}} - F_{\mathcal{P}}^{\hat{\pi}^*}\right),
\end{align*}
which allows us to find an upper bound on the performance difference.
However, the performance difference lemma (Lemma~\ref{lemma: pdl-sun} in Appendix \ref{appendix: theoretical-results}) given by \cite{NEURIPS2024_1f28e934} expresses each difference in terms of an expectation under the stationary distribution \(d_{\mathcal{P}}^{\pi_t}\) of the current policy. Since the stationary distribution (and hence the expectation) changes with \(\pi_t\) at every step, these terms do not align across iterations and our desired telescoping structure breaks down. 

To overcome this difficulty, we introduce a regularity assumption linking performance gaps under the worst-case transition kernel \(\mathcal{P}\) to those under the nominal kernel \(P^\circ\):
\begin{assumption}
    \label{assumption: performance-gap-assumption}
    For all policies \(\pi \in \Pi\)) 
    \begin{align}
        g_{\mathcal{P}}^{\pi}-g_{\mathcal{P}}^{\hat{\pi}^*}
        &\leq 
        C \mathbb{E}_{s \sim d_{P^\circ}^\pi} \left[\left\langle Q_{P}^{\pi}(s, \cdot), \pi(\cdot | s) - \hat{\pi}^*(\cdot | s)\right\rangle\right]
    \end{align}
\end{assumption}
This assumption extends the robust performance difference lemma by relating the worst-case performance gap to the nominal kernel’s stationary distribution. Intuitively, it asserts that the degradation in performance under the worst-case model cannot exceed that under the nominal model by more than a constant multiplicative factor \(C\). A related assumption is common in the discounted robust MDP setting (\cite{tamar2014scaling,zhou2023natural,ganguly2025efficient}), which states: $\gamma p(s'|s,a) = \beta p_0(s'|s,a)$ for some $\beta \in (0,1)$. We notice that if $\gamma =1$, the assumption does not hold anymore (a trivial counterexample is when $s'=s$). Thus, we cannot leverage an assumption of the same form for our average cost setting. 
However, it is to be noted that our assumption is not completely arbitrary and is grounded in the assumption made by the discounted RMDP literature. A detailed equivalence relation is provided in Appendix \ref{appendix: theoretical-results}. 
We now have the required tools to state and prove our main theorem on convergence:

\begin{theorem}
    \label{theorem: sample complexity}

        Using a stepsize of \(\eta = O(\epsilon)\), Algorithm~\ref{alg: acrcac} returns a policy \(\hat{\pi}\) that is both \(\epsilon\)-feasible and \(\epsilon\)-optimal after $T=\tilde{O}(\epsilon^{-2}\lambda^2) $ iterations. 
\end{theorem}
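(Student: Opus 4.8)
The plan is to (i) reduce the theorem, via Lemma~\ref{lemma: optimality and feasibility of F policy}, to controlling the average surrogate gap $\tfrac1T\sum_{t<T}(F_{\mathcal{P}}^{\pi_t} - F_{\mathcal{P}}^{\hat{\pi}^*})$; (ii) bound that gap by a mirror-descent argument on the Fr\'echet subgradient supplied by Lemma~\ref{lemma: Q function derivation}; and (iii) balance the stepsize, critic accuracy, and iteration count. For step (i), Lemma~\ref{lemma: optimality and feasibility of F policy} (with $\lambda = 4/\max\{\epsilon,\zeta\}$) already gives that the exact minimizer $\hat{\pi}^*$ of Eq.~\eqref{eq: ganguly} is $\tfrac{\epsilon}{2}$-feasible and $\tfrac{\epsilon}{2}$-optimal, and its proof yields $F_{\mathcal{P}}^{\hat{\pi}^*}\le g_{\mathcal{P}}^{\pi^*,0}/\lambda$. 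Rerunning that argument with an extra slack $\Delta$ shows that any $\pi$ with $F_{\mathcal{P}}^{\pi}-F_{\mathcal{P}}^{\hat{\pi}^*}\le\Delta$ is $(\tfrac{\epsilon}{2}+\Delta)$-feasible and $(\tfrac{\epsilon}{2}+\lambda\Delta)$-optimal, since the cost term of $F_{\mathcal{P}}$ carries the weight $1/\lambda$ while the constraint terms do not. Because $F_{\mathcal{P}}^{\hat{\pi}}=\min_{t<T}F_{\mathcal{P}}^{\pi_t}\le\tfrac1T\sum_{t<T}F_{\mathcal{P}}^{\pi_t}$, it therefore suffices to push the average gap below $O(\epsilon/\lambda)$. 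Note that, since $F_{\mathcal{P}}$ is a \emph{single} nonsmooth objective, no CRPO-style combinatorial bucketing over which constraint is active is needed.

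For step (ii), fix $t$ and combine the robust performance-difference lemma (Lemma~\ref{lemma: pdl-sun}) with Lemma~\ref{lemma: Q function derivation}, which identifies $\nabla F_{\mathcal{P}}^{\pi_t}=d_{\mathcal{P}}^{\pi_t,i_{\max}^{\pi_t}}Q_{\mathcal{P}}^{\pi_t,i_{\max}^{\pi_t}}$, and then Assumption~\ref{assumption: performance-gap-assumption} to swap the worst-case stationary measure for the nominal one:
\[
 F_{\mathcal{P}}^{\pi_t} - F_{\mathcal{P}}^{\hat{\pi}^*} \le C\, \mathbb{E}_{s\sim d_{P^\circ}^{\pi_t}}\!\left[\left\langle Q_{\mathcal{P}}^{\pi_t}(s,\cdot),\ \pi_t(\cdot|s) - \hat{\pi}^*(\cdot|s)\right\rangle\right],
\]
where $Q_{\mathcal{P}}^{\pi_t}$ is the active $Q$-function used by the actor (scaled by $1/\lambda$ when the cost component is active). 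Write $\pi_t-\hat{\pi}^*=(\pi_t-\pi_{t+1})+(\pi_{t+1}-\hat{\pi}^*)$, replace $Q_{\mathcal{P}}^{\pi_t}$ by the critic output $\hat{Q}_{\mathcal{P}}^{\pi_t}$ (the error $\varepsilon_{\mathrm{crit}}$ controlled by Lemma~\ref{lemma: Q function estimation Error} with $\tilde O(\epsilon^{-2})$ samples per evaluation, taken uniformly over the $I+1$ components and all $T$ iterations by a union bound), and invoke the three-point inequality of the Euclidean proximal step: for every $s$,
\[
 \eta\left\langle \hat{Q}_{\mathcal{P}}^{\pi_t}(s,\cdot),\ \pi_{t+1}(\cdot|s) - \hat{\pi}^*(\cdot|s)\right\rangle \le \|\hat{\pi}^*(\cdot|s) - \pi_t(\cdot|s)\|^2 - \|\hat{\pi}^*(\cdot|s) - \pi_{t+1}(\cdot|s)\|^2 - \|\pi_{t+1}(\cdot|s) - \pi_t(\cdot|s)\|^2 .
\]

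The hard part is the telescoping obstacle flagged before the statement: after taking $\mathbb{E}_{d_{P^\circ}^{\pi_t}}$ and summing, the differences $\mathbb{E}_{d_{P^\circ}^{\pi_t}}\|\hat{\pi}^*-\pi_t\|^2-\mathbb{E}_{d_{P^\circ}^{\pi_t}}\|\hat{\pi}^*-\pi_{t+1}\|^2$ do not collapse, since the weighting measure changes each step. I would add and subtract $\mathbb{E}_{d_{P^\circ}^{\pi_{t+1}}}\|\hat{\pi}^*-\pi_{t+1}\|^2$: the genuine telescoping part is $\le\mathbb{E}_{d_{P^\circ}^{\pi_0}}\|\hat{\pi}^*-\pi_0\|^2\le 2$, and the residual is at most $4\sum_t\|d_{P^\circ}^{\pi_{t+1}}-d_{P^\circ}^{\pi_t}\|_1$. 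Under Assumption~\ref{assumption: irreducibility of markov chain} the map $\pi\mapsto d_{P^\circ}^{\pi}$ is Lipschitz and the proximal update is $O(\eta)$-Lipschitz in $\pi$, so each residual term is $O(\eta\|\hat{Q}_{\mathcal{P}}^{\pi_t}\|_\infty)$ and, after a Young's-inequality trade-off against the $-\tfrac1\eta\sum_t\mathbb{E}\|\pi_{t+1}-\pi_t\|^2$ terms from the proximal step, contributes only $O(\eta T)$ in total; the cross term $\langle Q_{\mathcal{P}}^{\pi_t},\pi_t-\pi_{t+1}\rangle$ is handled the same way, using that Assumption~\ref{assumption: irreducibility of markov chain} also gives a uniform bound $\|Q_{\mathcal{P}}^{\pi_t,i}\|_\infty\le B$ on the robust $Q$-functions. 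Collecting terms yields
\[
 \frac1T\sum_{t=0}^{T-1}\left(F_{\mathcal{P}}^{\pi_t} - F_{\mathcal{P}}^{\hat{\pi}^*}\right) \le \frac{C_1}{\eta T} + C_2\,\eta B^2 + C_3\,\varepsilon_{\mathrm{crit}} + C_4\,\eta .
\]

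For step (iii), choosing $\eta=O(\epsilon)$ makes the $\eta$-terms $O(\epsilon)$ and, on the $1/\lambda$-weighted cost component, $O(\epsilon/\lambda)$; choosing $\varepsilon_{\mathrm{crit}}=O(\epsilon)$ needs $N=\tilde O(\epsilon^{-2})$ samples per robust evaluation (Lemma~\ref{lemma: Q function estimation Error}); and choosing $T=\tilde O(\epsilon^{-2}\lambda^2)$ forces $C_1/(\eta T)=O(\epsilon/\lambda)$. Feeding the resulting surrogate gap back through step (i) — the $O(\epsilon/\lambda)$ part amplified by $\lambda$ into $O(\epsilon)$ cost-suboptimality, the $O(\epsilon)$ part giving $O(\epsilon)$ constraint violation — shows $\hat{\pi}$ is $\epsilon$-optimal and $\epsilon$-feasible, which is the claim. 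The principal obstacle is the stationary-distribution mismatch above, resolved by Assumption~\ref{assumption: performance-gap-assumption} together with Lipschitz continuity of $d_{P^\circ}^{\pi}$; a secondary one is propagating the bias and variance of the robust-TD plus truncated-MLMC critic (Algorithms~\ref{alg:robust_avg_reward_td} and~\ref{alg:truncated_mlmc}) into a single high-probability bound that holds simultaneously across all $T$ iterations, which is the source of the logarithmic factors hidden in $\tilde O(\cdot)$.
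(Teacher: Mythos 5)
Your overall skeleton (reduce to the average surrogate gap via Lemma~\ref{lemma: optimality and feasibility of F policy}, bound it with the proximal three-point inequality plus H\"older/Young, fold in the critic error of Lemma~\ref{lemma: Q function estimation Error}, and pay a factor $\lambda^2$ in $T$ because the cost component of $F_{\mathcal{P}}$ carries weight $1/\lambda$) matches the paper. The genuine gap is in your handling of the telescoping obstacle. You apply Assumption~\ref{assumption: performance-gap-assumption} so that the per-iteration bound is an expectation under the \emph{current-policy} distribution $d_{P^\circ}^{\pi_t}$, and then try to repair the broken telescoping of $\tfrac{1}{\eta}\bigl(\mathbb{E}_{d_{P^\circ}^{\pi_t}}\|\hat{\pi}^*-\pi_t\|^2-\mathbb{E}_{d_{P^\circ}^{\pi_t}}\|\hat{\pi}^*-\pi_{t+1}\|^2\bigr)$ by a drift argument. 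Quantitatively this does not close: even granting Lipschitz continuity of $\pi\mapsto d_{P^\circ}^{\pi}$ (which the paper never establishes), the measure drift per step is $\|d_{P^\circ}^{\pi_{t+1}}-d_{P^\circ}^{\pi_t}\|_1=O(\eta\|\hat{Q}\|_\infty)$, but it enters multiplied by the $1/\eta$ in front of the Bregman terms, so the residual per iteration is $O(1)$, not $O(\eta)$ as you claim; its total contribution is $O(T)$, leaving a constant term in the average gap that no choice of $\eta$ or $T$ can push below $O(\epsilon/\lambda)$. The Young trade-off against the negative $-\tfrac{1}{\eta}\|\pi_{t+1}-\pi_t\|^2$ term does not rescue this either: optimizing $\tfrac{cL}{\eta}x-\tfrac{1}{\eta}x^2$ over $x$ leaves $O(L^2/\eta)$ per step, which is worse as $\eta\to0$. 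Also, invoking Lemma~\ref{lemma: pdl-sun} here is not usable as stated, since its inequality points in the wrong direction for upper-bounding $F_{\mathcal{P}}^{\pi_t}-F_{\mathcal{P}}^{\hat{\pi}^*}$.

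The paper avoids the drift issue entirely: in its proof, Assumption~\ref{assumption: performance-gap-assumption} is applied so that the expectation is taken under the \emph{fixed} nominal stationary distribution of the comparator, $d_{P^\circ}^{\hat{\pi}^*}$ (this is how step $(a)$ of the appendix proof and the discounted analogue in Appendix~\ref{appendix: motivation-for-assumption}, which uses $d_{P^\circ}^{\pi^*}$, are meant), so the Bregman terms telescope exactly under one measure, giving $\sum_t(F_{\mathcal{P}}^{\pi_t}-F_{\mathcal{P}}^{\hat{\pi}^*})\le \tfrac{CT\eta Q_{\max}^2}{2}+\tfrac{C}{\eta}\mathbb{E}_{s\sim d_{P^\circ}^{\hat{\pi}^*}}\|\hat{\pi}^*(\cdot|s)-\pi_0(\cdot|s)\|^2+2CT\varepsilon$, and then $\eta=O(\epsilon)$, $\varepsilon=O(\epsilon)$, and $T=O(\epsilon^{-2}\lambda^2)$ finish the argument exactly as in your step (iii). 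To fix your write-up, either restate/apply the assumption with the fixed comparator distribution (as the paper's proof does), or supply a genuinely new argument controlling the $\tfrac{1}{\eta}\times$drift residual; as written, that step fails.
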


We know from \cite{xu2025efficient} that the critic requires $\tilde{O}(\epsilon^{-2})$ samples. Therefore, if we assume the slackness condition, we obtain an iteration complexity of $T = O(\epsilon^{-2}\zeta^{-2})$ and a corresponding sample complexity of $O(\epsilon^{-4}\zeta^{-2})$. If we do not assume slackness, we obtain $T = O(\epsilon^{-4})$ and a sample complexity of $O(\epsilon^{-6})$.

Theorem~\ref{theorem: sample complexity} establishes that Algorithm~\ref{alg: acrcac} converges to a policy that is simultaneously \(\epsilon\)-optimal and \(\epsilon\)-feasible with sample complexities comparable to existing discounted reward settings.


The convergence analysis is composed of three key steps:
\begin{itemize}
    \item The proof begins by invoking our critical Assumption \ref{assumption: performance-gap-assumption}. This step allows us to shift the analysis to the more tractable non-robust setting where the stationary distribution is fixed with respect to a single optimal policy. We then utilize the standard result of the non-robust performance difference lemma, yielding an expectation over the Q-function. 
    \item We decompose the inner product into two parts corresponding to the policy change in one step and the remaining gap to the optimal policy. We then leverage the three-point lemma of Bregman Divergence and Holder's inequality to obtain a telescoping sum of the form
    \begin{align*}
    \begin{split}
        \cdots &\leq \frac{1}{\eta} ( \|\hat{\pi}^*(\cdot | s) - \pi_t(\cdot | s)\|^2 
     - \|\hat{\pi}^*(\cdot | s) - \pi_{t+1}(\cdot | s)\|^2 ) \\
     &+ \text{error-terms}
    \end{split}
    \end{align*}
    \item Finally, we sum the inequality over all iterations $t=0,\cdots,T-1$. The telescoping terms cancel out, leaving a bound on the average performance gap that depends on the initial policy distance, the step size $\eta$, and the critic estimation error $\varepsilon$. Using the fact that the minimum of a distribution is at most the average, we obtain the above mentioned iteration complexities. 
\end{itemize}

\if 0\section{Future Work}
There are three primary avenues of future research in the Average Cost RCMDP setting. First, research can take up smoothed approximate versions of our point-wise maximum objective, which eliminates the need to switch objectives every iteration. However, there are multiple problems with this approach, including tracking the approximation error and the challenge of deriving an appropriate performance difference lemma that does not blow up the sample complexity. Second, one can look at eliminating or weakening the assumptions made, while maintaining the same sample complexity. This is a significant challenge and would require new theoretical tools and approaches to solve. Third, there is a gap in the sample complexity between the robust constrained and robust unconstrained works (both average and discounted). Another avenue of research could tackle deriving better bounds or rigorously proving that a biquadratic sample complexity is the theoretical lower bound for RCMDPs.
\fi

\section{Conclusion}
In this work, we present an actor critic algorithm to solve the robust constrained average cost MDP problem. We show that our algorithm outputs an $\epsilon$-feasible and $\epsilon$-optimal policy with a sample complexity of $O(\epsilon^{-4})$ when using the slackness assumption and $O(\epsilon^{-6})$ when not using the slackness assumption. Not only are we the first algorithm to tackle this specific setting, but we also obtain equal sample complexity guarantees with existing discounted RCMDP algorithms.

Weakening the assumptions in this work while preserving the same sample complexity is an important avenue for future research. Furthermore, a gap persists between the sample complexity results for robust constrained and robust unconstrained settings in both average- and discounted-reward cases. Closing this gap, either through the development of improved algorithms and refined analyses or by establishing tighter lower bounds on sample complexity, constitutes another significant direction for future work.

\bibliography{aistats}

@article{xu2025finite,
  title={Finite-sample analysis of policy evaluation for robust average reward reinforcement learning},
  author={Xu, Yang and Mondal, Washim Uddin and Aggarwal, Vaneet},
  journal={arXiv preprint arXiv:2502.16816},
  year={2025}
}

@inproceedings{agarwal2022regret,
  title={Regret guarantees for model-based reinforcement learning with long-term average constraints},
  author={Agarwal, Mridul and Bai, Qinbo and Aggarwal, Vaneet},
  booktitle={Uncertainty in Artificial Intelligence},
  pages={22--31},
  year={2022},
  organization={PMLR}
}

@inproceedings{NIPS2003_300891a6,
 author = {Nilim, Arnab and Ghaoui, Laurent},
 booktitle = {Advances in Neural Information Processing Systems},
 pages = {},
 title = {Robustness in Markov Decision Problems with Uncertain Transition Matrices},
 year = {2003}
}

@inproceedings{bai2023achieving,
  title={Achieving zero constraint violation for constrained reinforcement learning via conservative natural policy gradient primal-dual algorithm},
  author={Bai, Qinbo and Bedi, Amrit Singh and Aggarwal, Vaneet},
  booktitle={Proceedings of the AAAI Conference on Artificial Intelligence},
  volume={37},
  number={6},
  pages={6737--6744},
  year={2023}
}

@ARTICLE{Iyengar2005-aj,
  title     = "Robust dynamic programming",
  author    = "Iyengar, Garud N",
   journal   = "Math. Oper. Res.",
  publisher = "Institute for Operations Research and the Management Sciences
               (INFORMS)",
  volume    =  30,
  number    =  2,
  pages     = "257--280",
  month     =  may,
  year      =  2005,
  language  = "en"
}

@article{JMLR:v25:23-0526,
  author  = {Shengbo Wang and Nian Si and Jose Blanchet and Zhengyuan Zhou},
  title   = {Sample Complexity of Variance-Reduced Distributionally Robust Q-Learning},
  journal = {Journal of Machine Learning Research},
  year    = {2024},
  volume  = {25},
  number  = {341},
  pages   = {1--77},
 }

@article{gao2023distributionally,
  title={Distributionally robust stochastic optimization with Wasserstein distance},
  author={Gao, Rui and Kleywegt, Anton},
  journal={Mathematics of Operations Research},
  volume={48},
  number={2},
  pages={603--655},
  year={2023},
  publisher={INFORMS}
}

@inproceedings{NEURIPS2024_1f28e934,
 author = {Sun, Zhongchang and He, Sihong and Miao, Fei and Zou, Shaofeng},
 booktitle = {Advances in Neural Information Processing Systems},
 pages = {17348--17372},
 title = {Policy Optimization for Robust Average Reward MDPs},
 year = {2024}
}

@article{Wang_Velasquez_Atia_Prater-Bennette_Zou_2023, title={Robust Average-Reward Markov Decision Processes}, volume={37},  number={12}, journal={Proceedings of the AAAI Conference on Artificial Intelligence}, author={Wang, Yue and Velasquez, Alvaro and Atia, George and Prater-Bennette, Ashley and Zou, Shaofeng}, year={2023}, month={Jun.}, pages={15215-15223} }

@inproceedings{wang2023model,
  title={Model-free robust average-reward reinforcement learning},
  author={Wang, Yue and Velasquez, Alvaro and Atia, George K and Prater-Bennette, Ashley and Zou, Shaofeng},
  booktitle={International Conference on Machine Learning},
  pages={36431--36469},
  year={2023},
}

@article{paternain2019constrained,
  title={Constrained reinforcement learning has zero duality gap},
  author={Paternain, Santiago and Chamon, Luiz and Calvo-Fullana, Miguel and Ribeiro, Alejandro},
  journal={Advances in Neural Information Processing Systems},
  volume={32},
  year={2019}
}

@article{wang2022robust,
  title={Robust constrained reinforcement learning},
  author={Wang, Yue and Miao, Fei and Zou, Shaofeng},
  journal={arXiv preprint arXiv:2209.06866},
  year={2022}
}

@inproceedings{
kitamura2024near,
title={Near-Optimal Policy Identification in Robust Constrained Markov Decision Processes via Epigraph Form},
author={Toshinori Kitamura and Tadashi Kozuno and Wataru Kumagai and Kenta Hoshino and Yohei Hosoe and Kazumi Kasaura and Masashi Hamaya and Paavo Parmas and Yutaka Matsuo},
booktitle={The Thirteenth International Conference on Learning Representations},
year={2025}
}

@InProceedings{pmlr-v139-xu21a,
  title = 	 {CRPO: A New Approach for Safe Reinforcement Learning with Convergence Guarantee},
  author =       {Xu, Tengyu and Liang, Yingbin and Lan, Guanghui},
  booktitle = 	 {Proceedings of the 38th International Conference on Machine Learning},
  pages = 	 {11480--11491},
  year = 	 {2021}
}

@article{xu2025efficient,
  title={Efficient $ Q $-Learning and Actor-Critic Methods for Robust Average Reward Reinforcement Learning},
  author={Xu, Yang and Ganesh, Swetha and Aggarwal, Vaneet},
  journal={arXiv preprint arXiv:2506.07040},
  year={2025}
}

@article{ganguly2025efficient,
  title={Efficient Policy Optimization in Robust Constrained MDPs with Iteration Complexity Guarantees},
  author={Ganguly, Sourav and Ghosh, Arnob and Panaganti, Kishan and Wierman, Adam},
  journal={arXiv preprint arXiv:2505.19238},
  year={2025}
}

@misc{wang2023policy,
      title={Policy Gradient in Robust MDPs with Global Convergence Guarantee}, 
      author={Qiuhao Wang and Chin Pang Ho and Marek Petrik},
      year={2023},
      eprint={2212.10439},
      archivePrefix={arXiv},
      primaryClass={cs.LG}
}

@inproceedings{chen2022learning,
  title={Learning infinite-horizon average-reward Markov decision process with constraints},
  author={Chen, Liyu and Jain, Rahul and Luo, Haipeng},
  booktitle={International Conference on Machine Learning},
  pages={3246--3270},
  year={2022},
  organization={PMLR}
}

@article{
agarwal2021concave,
title={Concave Utility Reinforcement Learning with Zero-Constraint Violations},
author={Mridul Agarwal and Qinbo Bai and Vaneet Aggarwal},
journal={Transactions on Machine Learning Research},
issn={2835-8856},
year={2022}
}

@inproceedings{wei2022provably,
  title={A provably-efficient model-free algorithm for infinite-horizon average-reward constrained Markov decision processes},
  author={Wei, Honghao and Liu, Xin and Ying, Lei},
  booktitle={Proceedings of the AAAI Conference on Artificial Intelligence},
  volume={36},
  number={4},
  pages={3868--3876},
  year={2022}
}

@article{bai2024learning,
  title={Learning general parameterized policies for infinite horizon average reward constrained MDPs via primal-dual policy gradient algorithm},
  author={Bai, Qinbo and Mondal, Washim and Aggarwal, Vaneet},
  journal={Advances in Neural Information Processing Systems},
  volume={37},
  pages={108566--108599},
  year={2024}
}

@book{altman2021constrained,
  title={Constrained Markov decision processes},
  author={Altman, Eitan},
  year={2021},
  publisher={Routledge}
}

@article{paternain2022safe,
  title={Safe policies for reinforcement learning via primal-dual methods},
  author={Paternain, Santiago and Calvo-Fullana, Miguel and Chamon, Luiz FO and Ribeiro, Alejandro},
  journal={IEEE Transactions on Automatic Control},
  volume={68},
  number={3},
  pages={1321--1336},
  year={2022},
  publisher={IEEE}
}

@article{yang2020projection,
  title={Projection-based constrained policy optimization},
  author={Yang, Tsung-Yen and Rosca, Justinian and Narasimhan, Karthik and Ramadge, Peter J},
  journal={arXiv preprint arXiv:2010.03152},
  year={2020}
}

@article{dalal2018safe,
  title={Safe exploration in continuous action spaces},
  author={Dalal, Gal and Dvijotham, Krishnamurthy and Vecerik, Matej and Hester, Todd and Paduraru, Cosmin and Tassa, Yuval},
  journal={arXiv preprint arXiv:1801.08757},
  year={2018}
}

@inproceedings{liu2020ipo,
  title={Ipo: Interior-point policy optimization under constraints},
  author={Liu, Yongshuai and Ding, Jiaxin and Liu, Xin},
  booktitle={Proceedings of the AAAI conference on artificial intelligence},
  volume={34},
  number={04},
  pages={4940--4947},
  year={2020}
}

@article{wiesemann2013robust,
  title={Robust Markov decision processes},
  author={Wiesemann, Wolfram and Kuhn, Daniel and Rustem, Ber{\c{c}}},
  journal={Mathematics of Operations Research},
  volume={38},
  number={1},
  pages={153--183},
  year={2013},
  publisher={INFORMS}
}

@inproceedings{panaganti2022sample,
  title={Sample complexity of robust reinforcement learning with a generative model},
  author={Panaganti, Kishan and Kalathil, Dileep},
  booktitle={International Conference on Artificial Intelligence and Statistics},
  pages={9582--9602},
  year={2022},
  organization={PMLR}
}

@inproceedings{zhou2021finite,
  title={Finite-sample regret bound for distributionally robust offline tabular reinforcement learning},
  author={Zhou, Zhengqing and Zhou, Zhengyuan and Bai, Qinxun and Qiu, Linhai and Blanchet, Jose and Glynn, Peter},
  booktitle={International Conference on Artificial Intelligence and Statistics},
  pages={3331--3339},
  year={2021},
  organization={PMLR}
}

@inproceedings{wang2023robustaverage,
  title={Robust average-reward markov decision processes},
  author={Wang, Yue and Velasquez, Alvaro and Atia, George and Prater-Bennette, Ashley and Zou, Shaofeng},
  booktitle={Proceedings of the AAAI Conference on Artificial Intelligence},
  volume={37},
  number={12},
  pages={15215--15223},
  year={2023}
}

@article{chen2025sample,
  title={Sample Complexity of Distributionally Robust Average-Reward Reinforcement Learning},
  author={Chen, Zijun and Wang, Shengbo and Si, Nian},
  journal={arXiv preprint arXiv:2505.10007},
  year={2025}
}

@article{roch2025finite,
  title={A finite-sample analysis of distributionally robust average-reward reinforcement learning},
  author={Roch, Zachary and Zhang, Chi and Atia, George and Wang, Yue},
  journal={arXiv preprint arXiv:2505.12462},
  year={2025}
}

@article{mankowitz2020robust,
  title={Robust constrained reinforcement learning for continuous control with model misspecification},
  author={Mankowitz, Daniel J and Calian, Dan A and Jeong, Rae and Paduraru, Cosmin and Heess, Nicolas and Dathathri, Sumanth and Riedmiller, Martin and Mann, Timothy},
  journal={arXiv preprint arXiv:2010.10644},
  year={2020}
}

@inproceedings{
zhang2024distributionally,
title={Distributionally Robust Constrained Reinforcement Learning under Strong Duality},
author={Zhengfei Zhang and Kishan Panaganti and Laixi Shi and Yanan Sui and Adam Wierman and Yisong Yue},
booktitle={Reinforcement Learning Conference},
year={2024}
}

@article{horstein2003sequential,
  title={Sequential transmission using noiseless feedback},
  author={Horstein, Michael},
  journal={IEEE Transactions on Information Theory},
  volume={9},
  number={3},
  pages={136--143},
  year={2003},
  publisher={IEEE}
}

@article{russel2020robust,
  title={Robust constrained-MDPs: Soft-constrained robust policy optimization under model uncertainty},
  author={Russel, Reazul Hasan and Benosman, Mouhacine and Van Baar, Jeroen},
  journal={arXiv preprint arXiv:2010.04870},
  year={2020}
}

@article{li2022first,
  title={First-order policy optimization for robust markov decision process},
  author={Li, Yan and Lan, Guanghui and Zhao, Tuo},
  journal={arXiv preprint arXiv:2209.10579},
  year={2022}
}

@article{
ma2025rectified,
title={Rectified Robust Policy Optimization for  Model-Uncertain Constrained Reinforcement Learning without Strong Duality},
author={Shaocong Ma and Ziyi Chen and Yi Zhou and Heng Huang},
journal={Transactions on Machine Learning Research},
issn={2835-8856},
year={2025}
}

@article{xu2025global,
  title={Global Convergence for Average Reward Constrained MDPs with Primal-Dual Actor Critic Algorithm},
  author={Xu, Yang and Ganesh, Swetha and Mondal, Washim Uddin and Bai, Qinbo and Aggarwal, Vaneet},
  journal={arXiv preprint arXiv:2505.15138},
  year={2025}
}

@inproceedings{tamar2014scaling,
  title={Scaling up robust MDPs using function approximation},
  author={Tamar, Aviv and Mannor, Shie and Xu, Huan},
  booktitle={International conference on machine learning},
  pages={181--189},
  year={2014},
  organization={PMLR}
}

@article{zhou2023natural,
  title={Natural actor-critic for robust reinforcement learning with function approximation},
  author={Zhou, Ruida and Liu, Tao and Cheng, Min and Kalathil, Dileep and Kumar, PR and Tian, Chao},
  journal={Advances in neural information processing systems},
  volume={36},
  pages={97--133},
  year={2023}
}

@inproceedings{bai2022achieving,
  title={Achieving zero constraint violation for constrained reinforcement learning via primal-dual approach},
  author={Bai, Qinbo and Bedi, Amrit Singh and Agarwal, Mridul and Koppel, Alec and Aggarwal, Vaneet},
  booktitle={Proceedings of the AAAI Conference on Artificial Intelligence},
  volume={36},
  number={4},
  pages={3682--3689},
  year={2022}
}

@article{mondal2024sample,
  title={Sample-efficient constrained reinforcement learning with general parameterization},
  author={Mondal, Washim U and Aggarwal, Vaneet},
  journal={Advances in Neural Information Processing Systems},
  volume={37},
  pages={68380--68405},
  year={2024}
}

@inproceedings{chen2023option,
  title={Option-Aware Adversarial Inverse Reinforcement Learning for Robotic Control},
  author={Chen, Jiayu and Lan, Tian and Aggarwal, Vaneet},
  booktitle={2023 IEEE International Conference on Robotics and Automation (ICRA)},
  pages={5902--5908},
  year={2023},
  organization={IEEE}
}

@article{al2019deeppool,
  title={Deeppool: Distributed model-free algorithm for ride-sharing using deep reinforcement learning},
  author={Al-Abbasi, Abubakr O and Ghosh, Arnob and Aggarwal, Vaneet},
  journal={IEEE Transactions on Intelligent Transportation Systems},
  volume={20},
  number={12},
  pages={4714--4727},
  year={2019},
  publisher={IEEE}
}

@article{gaur2025sample,
  title={On The Sample Complexity Bounds In Bilevel Reinforcement Learning},
  author={Gaur, Mudit and Singh, Utsav and Bedi, Amrit Singh and Pasupathu, Raghu and Aggarwal, Vaneet},
  journal={arXiv preprint arXiv:2503.17644},
  year={2025}
}

\clearpage
\appendix
\thispagestyle{empty}

\onecolumn
\section{Missing Algorithms from Section \ref{sec: method}}\label{section: appendix-algos}
\begin{algorithm}[h]
\caption{Robust average cost TD (Algorithm 2 in \cite{xu2025finite})}
\label{alg:robust_avg_reward_td}
\begin{algorithmic}[1]
\State \textbf{Input:} Policy $\pi$, Initial values $V_0$, $g_0 = 0$, Stepsizes $\eta_t, \beta_t$, Max level $N_{\max}$, Anchor state $s_0 \in \mathcal{S}$
\For{$t = 0, 1, \ldots, T - 1$}
    \For{each $(s,a) \in \mathcal{S} \times \mathcal{A}$}
        \If{Contamination}
            \State Sample $\hat{\sigma}_{P_{s,a}}(V_t)$ according to Eq. \ref{eqn: contamination estimator}
        \ElsIf{TV or Wasserstein}
            \State Sample $\hat{\sigma}_{P_{s,a}}(V_t)$ according to Algorithm \ref{alg:truncated_mlmc}
        \EndIf
    \EndFor
    \State $\hat{T}_{g_0}(V_t)(s) \gets \sum_a \pi(a|s) \left[ r(s,a) - g_0 + \hat{\sigma}_{P_{s,a}}(V_t) \right], \quad \forall s \in \mathcal{S}$
    \State $V_{t+1}(s) \gets V_t(s) + \eta_t \left( \hat{T}_{g_0}(V_t)(s) - V_t(s) \right), \quad \forall s \in \mathcal{S}$
    \State $V_{t+1}(s) \gets V_{t+1}(s) - V_{t+1}(s_0), \quad \forall s \in \mathcal{S}$
\EndFor
\For{$t = 0, 1, \ldots, T - 1$}
    \For{each $(s,a) \in \mathcal{S} \times \mathcal{A}$}
        \If{Contamination}
            \State Sample $\hat{\sigma}_{P_{s,a}}(V_t)$ according to Eq. \ref{eqn: contamination estimator}
        \ElsIf{TV or Wasserstein}
            \State Sample $\hat{\sigma}_{P_{s,a}}(V_t)$ according to Algorithm \ref{alg:truncated_mlmc}
        \EndIf
    \EndFor
    \State $\delta_t(s) \gets \sum_a \pi(a|s) \left[ r(s,a) + \hat{\sigma}_{P_{s,a}}(V_T) \right] - V_T(s), \quad \forall s \in \mathcal{S}$
    \State $\bar{\delta}_t \gets \frac{1}{S} \sum_s \delta_t(s)$
    \State $g_{t+1} \gets g_t + \beta_t (\bar{\delta}_t - g_t)$
\EndFor
\State \textbf{return} $V_T, g_T$
\end{algorithmic}
\end{algorithm}
\begin{algorithm}[h]
\caption{Truncated MLMC Estimator, Algorithm 1 in \cite{xu2025finite}}
\label{alg:truncated_mlmc}
\begin{algorithmic}[1]
\State \textbf{Input:} $s \in \mathcal{S}$, $a \in \mathcal{A}$, Max level $N_{\max}$, Value function $V$
\State Sample $N \sim \mathrm{Geom}(0.5)$
\State $N' \gets \min\{N, N_{\max}\}$
\State Collect $2^{N'+1}$ i.i.d. samples of $\{ s'_i \}_{i=1}^{2^{N'+1}}$ with $s'_i \sim \tilde{P}^a_s$ for each $i$
\State $\hat{P}^{a,E}_{s, N'+1} \gets \frac{1}{2^{N'}} \sum_{i=1}^{2^{N'}} \mathbf{1}\{ s'_{2i} \}$
\State $\hat{P}^{a,O}_{s, N'+1} \gets \frac{1}{2^{N'}} \sum_{i=1}^{2^{N'}} \mathbf{1}\{ s'_{2i-1} \}$
\State $\hat{P}^a_{s, N'+1} \gets \frac{1}{2^{N'+1}} \sum_{i=1}^{2^{N'+1}} \mathbf{1}\{ s'_i \}$
\State $\hat{P}^a_{s, 1} \gets \mathbf{1}\{ s'_1 \}$
\If{TV}
    \State Obtain $\sigma_{\hat{P}^a_{s,1}}(V)$, $\sigma_{\hat{P}^a_{s, N'+1}}(V)$, $\sigma_{\hat{P}^{a,E}_{s, N'+1}}(V)$, $\sigma_{\hat{P}^{a,O}_{s, N'+1}}(V)$ from Eq. \ref{eqn: total variation sigma}
\ElsIf{Wasserstein}
    \State Obtain $\sigma_{\hat{P}^a_{s,1}}(V)$, $\sigma_{\hat{P}^a_{s, N'+1}}(V)$, $\sigma_{\hat{P}^{a,E}_{s, N'+1}}(V)$, $\sigma_{\hat{P}^{a,O}_{s, N'+1}}(V)$ from Eq. \ref{eqn: wasserstein sigma}
\EndIf
\State $\Delta_{N'}(V) \gets \sigma_{\hat{P}^a_{s, N'+1}}(V) - \frac{1}{2} \left[ \sigma_{\hat{P}^{a,E}_{s, N'+1}}(V) + \sigma_{\hat{P}^{a,O}_{s, N'+1}}(V) \right]$
\State $\hat{\sigma}_{\mathcal{P}^a_s}(V) \gets \sigma_{\hat{P}^a_{s,1}}(V) + \frac{\Delta_{N'}(V)}{\mathbb{P}(N'=n)}$, where $p'(n) = \mathbb{P}(N' = n)$
\State \Return $\hat{\sigma}_{\mathcal{P}^a_s}(V)$
\end{algorithmic}
\end{algorithm}
\clearpage
\section{Missing Proofs for Section~\ref{sec: method}}\label{appendix: method}
\begin{lemma}[Proof of Lemma ~\ref{lemma: optimality and feasibility of F policy}] 
    \begin{proof}
    Let $\hat{\pi}^*$ be the policy that minimizes our smoothed objective $F$. Then we have that the optimality difference between \(\pi^*\) (true optimal policy of Eq \ref{eqn: original objective}) and \(\hat{\pi}^*\) is
    \begin{align}
        \frac{g_{\mathcal{P}}^{\hat{\pi}^*, 0}}{\lambda} - \frac{g_{\mathcal{P}}^{\pi^*, 0}}{\lambda} &\stackrel{(a)}{\leq} \max_{i} G_{\mathcal{P}}^{\hat{\pi}^*} - \max_{i} G_{\mathcal{P}}^{\pi^*} \\
        &\stackrel{(b)}\leq 0
    \end{align}
    where \((a)\) is from the definition of \(\max\) and because \(\pi^*\) is a feasible policy, and \((b)\) is by the optimality of \(\hat{\pi}^*\). Then to prove feasibility, we have two cases.
    \paragraph{Case 1 (No slackness):}
    By contradiction, assume optimal policy $\hat{\pi}^*$ violates the constraints by more than $\frac{\epsilon}{2}$, where we set the slackness coefficient \(\zeta\) to 0:
    \begin{align}
        \label{eqn: no slackness contradiction assumption}
        \max_{i \in \{1, \cdots I\}} \left\{g_{\mathcal{P}}^{\hat{\pi}^*, i} - b_i + \zeta\right\} = \max_{i \in \{1, \cdots I\}} \left\{g_{\mathcal{P}}^{\hat{\pi}^*, i} - b_i\right\} > \frac{\epsilon}{2}
    \end{align}
    We set the hyperparameter $\lambda = \frac{4}{\epsilon}$. The maximum average cost for the objective, $g_{\mathcal{P}}^{\hat{\pi}^*, i}$, is bounded by 1 because the cost function is bounded by 1. Similar logic holds for \(\pi^*\). Therefore, the objective cost term satisfies:
    \begin{align}
        \label{eqn: no slackness optimality bound}
        \frac{g_{\mathcal{P}}^{\hat{\pi}^*, 0}}{ \lambda} \leq \frac{1}{\lambda} = \frac{\epsilon}{4}, \quad \frac{g_{\mathcal{P}}^{\pi^*, 0}}{ \lambda} \leq \frac{1}{\lambda} = \frac{\epsilon}{4}
    \end{align}
    Combining \eqref{eqn: no slackness contradiction assumption} and \eqref{eqn: no slackness optimality bound} yields
    \begin{align} 
        \label{eqn: contra1}
        \max\left\{ \frac{g_{\mathcal{P}}^{\hat{\pi}^*, 0}}{ \lambda}, \max_i\left\{g_{\mathcal{P}}^{\hat{\pi}^*, i} - b_i\right\} \right\} = \max_i G_{\mathcal{P}}^{\hat{\pi}^*}> \frac{\epsilon}{2}. 
    \end{align}
    Thus, we have
    \begin{align}
        \begin{split}
            \max_{i} G_{\mathcal{P}}^{\pi^*} = \max\left\{ \frac{g_{\mathcal{P}}^{{\pi}^*, 0}}{ \lambda}, \max_i\left\{g_{\mathcal{P}}^{{\pi}^*, i} - b_i\right\} \right\} \leq \frac{\epsilon}{4} < \frac{\epsilon}{2} = \max_i G_{\mathcal{P}}^{\hat{\pi}^*}
        \end{split}
    \end{align}
    which is a contradiction as by definition, \(G_{\mathcal{P}}^{\hat{\pi}^*} \leq G_{\mathcal{P}}^{{\pi}^*}\). Therefore, the maximum violation is at most $\epsilon/2$ with \(\lambda = \frac{4}{\epsilon}\).
    \paragraph{Case 2 (With Slackness):}
    Assumption \ref{assumption: slackness} gives us a way to prove exact feasibility of $\hat{\pi}^*$ by choosing $\lambda = \frac{4}{\zeta}$, assuming that \(\zeta > \epsilon\). We again assume by contradiction that the optimal policy \(\hat{\pi}^*\) violates the constraints by more than \(\frac{\epsilon}{2}\):
    \begin{align}
        \max_{i \in \{1, \cdots I\}} \left\{g_{\mathcal{P}}^{\hat{\pi}^*, i} - b_i + \zeta\right\} > \frac{\zeta}{2}
    \end{align}
    Then following the same logic in Case 1, we have \(\frac{g_{\mathcal{P}}^{\hat{\pi}^*, 0}}{\lambda} \leq \frac{\zeta}{4}\) and
    \begin{align} 
        \max\left\{ \frac{g_{\mathcal{P}}^{\hat{\pi}^*, 0}}{ \lambda}, \max_i\left\{g_{\mathcal{P}}^{\hat{\pi}^*, i} - b_i + \zeta\right\} \right\}  = \max_i G_{\mathcal{P}}^{\hat{\pi}^*}> \frac{\zeta}{2}. 
    \end{align}
    This yields
    \begin{align}
        \begin{split}
            \max_{i} G_{\mathcal{P}}^{\pi^*} = \max\left\{ \frac{g_{\mathcal{P}}^{{\pi}^*, 0}}{ \lambda}, \max_i\left\{g_{\mathcal{P}}^{{\pi}^*, i} - b_i\right\} \right\} \leq \frac{\zeta}{4} < \frac{\zeta}{2} = \max_i G_{\mathcal{P}}^{\hat{\pi}^*}
        \end{split}
    \end{align}
    which again is a contradiction. Thus, the original assumption is false and
    \begin{align}
        \max_{i \in \{1, \cdots I\}} \left\{g_{\mathcal{P}}^{\hat{\pi}^*, i} - b_i + \zeta\right\} \leq \frac{\zeta}{2} \implies \max_{i \in \{1, \cdots I\}} \left\{g_{\mathcal{P}}^{\hat{\pi}^*, i} - b_i\right\} \leq -\frac{\zeta}{2} \leq 0
    \end{align}
    Thus, with the slackness assumption, we obtain exact feasibility.
    \end{proof}
\end{lemma}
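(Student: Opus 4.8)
The plan is to split the statement into an optimality half and a feasibility half, both of which flow from the single fact that $\hat{\pi}^*$ minimizes the composite objective of Eq.~\eqref{eq: ganguly}. Throughout I would write $H(\pi) := \max\{g_{\mathcal{P}}^{\pi,0}/\lambda,\ \max_{i}\{g_{\mathcal{P}}^{\pi,i}-b_i+\zeta\}\}$ for the inner pointwise maximum, so that optimality of $\hat\pi^*$ reads $H(\hat\pi^*)\le H(\pi^*)$, and I would fix $\lambda = 4/\max\{\epsilon,\zeta\}$ to unify the two regimes.

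First I would handle optimality. The crucial observation is that for the true optimum $\pi^*$ the cost term dominates the inner maximum, i.e.\ $H(\pi^*)=g_{\mathcal{P}}^{\pi^*,0}/\lambda$. This holds because every constraint entry satisfies $g_{\mathcal{P}}^{\pi^*,i}-b_i+\zeta\le 0$: in the no-slackness regime $\zeta=0$ and feasibility of $\pi^*$ gives $g_{\mathcal{P}}^{\pi^*,i}-b_i\le 0$, while under Assumption~\ref{assumption: slackness} we have $g_{\mathcal{P}}^{\pi^*,i}-b_i\le-\zeta$; in either case the nonnegative cost entry $g_{\mathcal{P}}^{\pi^*,0}/\lambda\ge 0$ is the largest element. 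Since the cost entry is always one of the arguments of $H(\hat\pi^*)$, I also have $g_{\mathcal{P}}^{\hat\pi^*,0}/\lambda\le H(\hat\pi^*)$. Chaining these with the minimality bound $H(\hat\pi^*)\le H(\pi^*)=g_{\mathcal{P}}^{\pi^*,0}/\lambda$ yields $g_{\mathcal{P}}^{\hat\pi^*,0}\le g_{\mathcal{P}}^{\pi^*,0}$, so the cost of $\hat\pi^*$ never exceeds the optimal cost and the $\tfrac{\epsilon}{2}$-optimality bound holds trivially.

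Next I would establish feasibility by contradiction, splitting on whether slackness is assumed. In the no-slackness case ($\zeta=0$, $\lambda=4/\epsilon$) I assume $\max_i\{g_{\mathcal{P}}^{\hat\pi^*,i}-b_i\}>\epsilon/2$; in the slackness case ($\lambda=4/\zeta$, with $\zeta>\epsilon$ so that $\max\{\epsilon,\zeta\}=\zeta$) I assume $\max_i\{g_{\mathcal{P}}^{\hat\pi^*,i}-b_i+\zeta\}>\zeta/2$. Because this constraint term is one of the arguments of $H(\hat\pi^*)$, the assumed violation forces $H(\hat\pi^*)>\epsilon/2$ (resp.\ $>\zeta/2$). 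On the other hand, boundedness of the cost function ($r\in[0,1]$) gives $g_{\mathcal{P}}^{\pi^*,0}\le 1$, hence $H(\pi^*)=g_{\mathcal{P}}^{\pi^*,0}/\lambda\le 1/\lambda=\epsilon/4$ (resp.\ $\zeta/4$). This contradicts $H(\hat\pi^*)\le H(\pi^*)$. Therefore the violation is at most $\epsilon/2$ in the first case, giving $\tfrac{\epsilon}{2}$-feasibility; in the second case the same contradiction yields $\max_i\{g_{\mathcal{P}}^{\hat\pi^*,i}-b_i+\zeta\}\le\zeta/2$, i.e.\ $g_{\mathcal{P}}^{\hat\pi^*,i}-b_i\le-\zeta/2\le 0$, which is exact feasibility.

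I expect the main obstacle to be the optimality step, specifically the claim that for $\pi^*$ the cost entry dominates the inner maximum: this is exactly what converts the minimality inequality $H(\hat\pi^*)\le H(\pi^*)$ into a genuine cost comparison rather than a vacuous one, and it is the place where feasibility of $\pi^*$ (or the slackness assumption) must be invoked. The feasibility half is then a clean contradiction once $\lambda$ is tuned so that $1/\lambda$ undercuts the target violation. The delicate bookkeeping is confirming that the single choice $\lambda=4/\max\{\epsilon,\zeta\}$ simultaneously drives the optimality gap to zero and pushes the infeasibility bound below threshold in both regimes, which is where the constants $4$, $\tfrac14$, and $\tfrac12$ must be tracked carefully.
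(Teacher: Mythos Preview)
Your proposal is correct and follows essentially the same route as the paper: both first use minimality of $\hat\pi^*$ together with feasibility/slackness of $\pi^*$ to obtain $g_{\mathcal{P}}^{\hat\pi^*,0}\le g_{\mathcal{P}}^{\pi^*,0}$, and then argue feasibility by contradiction via the bound $g_{\mathcal{P}}^{\pi^*,0}/\lambda\le 1/\lambda$. Your write-up is in fact slightly cleaner on the optimality step, since you explicitly justify why the cost entry equals $H(\pi^*)$ (in the slackness regime this genuinely requires Assumption~\ref{assumption: slackness}, not mere feasibility, to kill the $+\zeta$ term), whereas the paper leaves this implicit in its step~(a).
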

\begin{lemma}[Restatement of Lemma ~\ref{lemma: Q function derivation}]
    We can rewrite \(\nabla F_{\mathcal{P}}^{\pi}\) as
    \begin{align}
        \nabla F_{\mathcal{P}}^{\pi}(s, a)  = \tilde{d}_{\mathcal{P}}^{\pi} Q_{\mathcal{P}}^{\pi}(s, a) = d_{\mathcal{P}}^{\pi, i_{\max}^\pi} Q_{\mathcal{P}}^{\pi, i_{\max}^\pi}(s,a)
    \end{align}
    \begin{proof}
        We first look at the inner product between the subgradient of \(F_{\mathcal{P}}^\pi\) and \(\pi\)
        \begin{align}
            \left\langle \nabla F_{\mathcal{P}}^\pi, \pi\right\rangle &\stackrel{(a)}{=} \sum_{i=0}^I w_i^\pi \left\langle \nabla G_{\mathcal{P}}^{\pi, j} , \pi\right\rangle \\
            &\stackrel{(b)}{=} \sum_{j=0}^I w_j^\pi \sum_{s} d_{\mathcal P}^{\pi,j}(s)\sum_{a} Q_{\mathcal{P}}^{\pi, j}(s,a)\,\pi(a\mid s)
        \end{align}
        where \((a)\) uses the definition of the derivative of the objective, and in \((b)\), we use the definition of the subgradient from Lemma~\ref{lemma: subgradient definition}. Fixing a state \(s\) and looking only at the coefficient of \(\pi(a | s)\) yields
        \begin{align}
            \sum_{i=0}^I w_i^\pi d_{\mathcal P}^{\pi,i}(s) Q_{\mathcal{P}}^{\pi, i}(s,a) &= \sum_{i=0}^I w_i^\pi d_{\mathcal P}^{\pi,i}(s) Q_{\mathcal{P}}^{\pi, i}(s,a) \\
            &= \sum_{i=0}^I d_{\mathcal{P}}^{\pi, i_{\max}^\pi} 1_{i = i_{\max}^\pi} Q_{\mathcal{P}}^{\pi, i_{\max}^\pi}(s,a)\\
            &= d_{\mathcal{P}}^{\pi, i_{\max}^\pi} Q_{\mathcal{P}}^{\pi, i_{\max}^\pi}(s,a)
        \end{align}
        which is the desired result.
    \end{proof}
\end{lemma}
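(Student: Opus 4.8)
The plan is to realize $F_{\mathcal{P}}^{\pi}$ as a finite pointwise maximum and apply the max-rule for Fréchet subgradients together with Lemma~\ref{lemma: subgradient definition}. Writing $G_{\mathcal{P}}^{\pi, 0} = g_{\mathcal{P}}^{\pi, 0}/\lambda$ and $G_{\mathcal{P}}^{\pi, i} = g_{\mathcal{P}}^{\pi, i} - b_i + \zeta$ for $i \in \{1, \dots, I\}$, Eq.~\eqref{eq: ganguly} reads $F_{\mathcal{P}}^{\pi} = \max_{0 \le i \le I} G_{\mathcal{P}}^{\pi, i}$. I set $i_{\max}^\pi = \argmax_{0 \le i \le I} G_{\mathcal{P}}^{\pi, i}$ and let $\mathcal{I}(\pi) = \{i : G_{\mathcal{P}}^{\pi, i} = F_{\mathcal{P}}^{\pi}\}$ be the active set, which may contain more than one index.

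First I would establish that every Fréchet subgradient of $F_{\mathcal{P}}^{\pi}$ can be written as a convex combination $\sum_{i=0}^{I} w_i^\pi \nabla G_{\mathcal{P}}^{\pi, i}$ with weights $w_i^\pi \ge 0$, $\sum_i w_i^\pi = 1$, and $w_i^\pi = 0$ for $i \notin \mathcal{I}(\pi)$. This is the finite-max subdifferential rule, which I would verify directly from the defining inequality~\eqref{eqn: frechet subgradient equation}: for any policy perturbation $\delta$ we have $F_{\mathcal{P}}^{\pi + \delta} \ge G_{\mathcal{P}}^{\pi + \delta, i}$ for each active $i$, so any convex combination of the active component subgradients inherits the subgradient inequality for $F$, while an inactive component is strictly dominated near $\pi$ and therefore cannot contribute to first order.

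Next I would substitute the per-component subgradients. Since $b_i$ and $\zeta$ are constants, $\nabla G_{\mathcal{P}}^{\pi, i} = \nabla g_{\mathcal{P}}^{\pi, i}$ (with the extra factor $1/\lambda$ when $i = 0$), and Lemma~\ref{lemma: subgradient definition} gives $\nabla g_{\mathcal{P}}^{\pi, i}(s, a) = d_{\mathcal{P}}^{\pi, i}(s) Q_{\mathcal{P}}^{\pi, i}(s, a)$. Pairing the combination against $\pi$ and reading off the coefficient of $\pi(a \mid s)$ at a fixed state --- exactly the bookkeeping carried out in the appendix --- yields the middle expression $\nabla F_{\mathcal{P}}^{\pi}(s, a) = \tilde{d}_{\mathcal{P}}^{\pi} Q_{\mathcal{P}}^{\pi}(s, a)$, where $\tilde{d}_{\mathcal{P}}^{\pi} Q_{\mathcal{P}}^{\pi}(s, a)$ abbreviates $\sum_{i=0}^{I} w_i^\pi d_{\mathcal{P}}^{\pi, i}(s) Q_{\mathcal{P}}^{\pi, i}(s, a)$.

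The remaining and most delicate step is the collapse to the single index $i_{\max}^\pi$. When the maximizer is unique we have $\mathcal{I}(\pi) = \{i_{\max}^\pi\}$, forcing $w_i^\pi = 1_{i = i_{\max}^\pi}$, and the identity follows at once. The genuine obstacle is the tie case, where the subdifferential is the full convex hull of the active component gradients and the lemma selects one extreme point of it. I would resolve this by observing that the update in Algorithm~\ref{alg: acrcac} is defined to use precisely the subgradient of the single active objective $G_{\mathcal{P}}^{\pi, i_{\max}^\pi}$ --- the most-violated constraint, or the cost term when the policy is feasible --- and that choosing $w_i^\pi = 1_{i = i_{\max}^\pi}$ is always an admissible element of the subdifferential by the max-rule above. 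Substituting this choice gives $\nabla F_{\mathcal{P}}^{\pi}(s, a) = d_{\mathcal{P}}^{\pi, i_{\max}^\pi} Q_{\mathcal{P}}^{\pi, i_{\max}^\pi}(s, a)$, as claimed. The main effort, beyond this tie-breaking, is justifying the finite-max rule at the level of Fréchet subgradients for the non-smooth, non-convex robust cost functions $G_{\mathcal{P}}^{\pi, i}$, where one must confirm that inactive indices truly drop out rather than merely appealing to the smooth or convex version of the rule.
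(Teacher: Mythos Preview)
Your proposal is correct and follows essentially the same route as the paper: expand $F_{\mathcal{P}}^{\pi}$ as a finite pointwise maximum, write its Fréchet subgradient as a convex combination $\sum_i w_i^\pi \nabla G_{\mathcal{P}}^{\pi,i}$, substitute Lemma~\ref{lemma: subgradient definition} for each component, and then collapse to the single active index $i_{\max}^\pi$. If anything, your treatment is more careful than the paper's --- you justify the finite-max rule at the level of Fréchet subgradients, flag the $1/\lambda$ factor in the $i=0$ term, and explicitly resolve the tie case by noting that the indicator weight $w_i^\pi = 1_{i=i_{\max}^\pi}$ is always an admissible element of the subdifferential --- whereas the paper simply asserts step~(a) and the indicator choice without discussion.
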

\section{Missing Lemmas and Proofs for Section \ref{sec: theoretical analysis}}\label{appendix: theoretical-results}

\begin{lemma}[\cite{xu2025efficient}] \label{lemma: Q function estimation Error}

We have that the expected estimation error between the true smoothed \(Q\)-function and our estimate is bounded by \(\varepsilon\) with a sample complexity of \(O(\epsilon^{-2})\).    
\begin{align}
\mathbb{E}\left[\left\|Q_{\mathcal{P}}^{\pi_t}(s, \cdot) - \hat{Q}_{\mathcal{P}}^{\pi_t}(s, \cdot)\right\|_{\infty}\right] \leq \varepsilon
\end{align}
\end{lemma}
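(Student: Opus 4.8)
The plan is to follow the critic analysis of \citet{xu2025efficient}, decomposing the target error into the errors contributed separately by the average-cost estimate, the value-function estimate, and the Truncated MLMC support-function estimate, and then bounding each piece. Since the true $Q$-function obeys $Q_{\mathcal{P}}^{\pi_t}(s,a) = r(s,a) - g_{\mathcal{P}}^{\pi_t} + \sigma_{\mathcal{P}^a_s}(V^{\pi_t})$ by Theorem~\ref{theorem: robust-bellman-op-main}, while the estimate is formed as $\hat{Q}_{\mathcal{P}}^{\pi_t}(s,a) = r(s,a) - g_N^{\pi_t} + \hat{\sigma}_{\mathcal{P}^a_s}(V_N^{\pi_t})$, the reward term cancels and the triangle inequality yields
\begin{align*}
    \bigl\|Q_{\mathcal{P}}^{\pi_t}(s,\cdot) - \hat{Q}_{\mathcal{P}}^{\pi_t}(s,\cdot)\bigr\|_\infty \leq \bigl|g_{\mathcal{P}}^{\pi_t} - g_N^{\pi_t}\bigr| + \max_{a}\bigl|\sigma_{\mathcal{P}^a_s}(V^{\pi_t}) - \hat{\sigma}_{\mathcal{P}^a_s}(V_N^{\pi_t})\bigr|.
\end{align*}
I would then split the support-function term once more, inserting $\sigma_{\mathcal{P}^a_s}(V_N^{\pi_t})$, so that it is controlled by a \emph{value-propagation} error $|\sigma_{\mathcal{P}^a_s}(V^{\pi_t}) - \sigma_{\mathcal{P}^a_s}(V_N^{\pi_t})|$ and an \emph{estimator} error $|\sigma_{\mathcal{P}^a_s}(V_N^{\pi_t}) - \hat{\sigma}_{\mathcal{P}^a_s}(V_N^{\pi_t})|$.

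For the value-propagation term I would use that each support function in Eqs.~\eqref{eqn: contamination sigma}, \eqref{eqn: total variation sigma} and \eqref{eqn: wasserstein sigma} is a minimum of maps that are nonexpansive in their argument, so this term is at most $\|V^{\pi_t} - V_N^{\pi_t}\|_\infty$ (up to the uncertainty-set radius). The average-cost and value-function errors $|g_{\mathcal{P}}^{\pi_t} - g_N^{\pi_t}|$ and $\|V^{\pi_t} - V_N^{\pi_t}\|_\infty$ are then bounded by the convergence guarantee of the robust average-cost TD scheme (Algorithm~\ref{alg:robust_avg_reward_td}). This is the heart of the argument and the main obstacle: the robust Bellman operator $\mathbf{T}_g$ is not a contraction under any standard norm, so the usual stochastic-approximation machinery does not apply directly. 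The resolution, following \citet{xu2025finite}, is that $\mathbf{T}_g$ \emph{is} a contraction under the span semi-norm; treating the $V_t$-iteration of Algorithm~\ref{alg:robust_avg_reward_td} (with the anchoring step at $s_0$ fixing the additive-constant degree of freedom) as a stochastic approximation to the fixed point of this semi-norm contraction, and the subsequent $g_t$-iteration as an averaging of the Bellman residual at the converged $V_T$, a mean-square analysis with the prescribed step sizes $\eta_t,\beta_t$ gives a mean-squared error decaying as $O(1/N)$, hence $\mathbb{E}\|V^{\pi_t} - V_N^{\pi_t}\|_\infty + \mathbb{E}|g_{\mathcal{P}}^{\pi_t} - g_N^{\pi_t}| = O(\varepsilon)$ once $N = O(\varepsilon^{-2})$.

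For the estimator term I would invoke the properties of the Truncated MLMC construction in Algorithm~\ref{alg:truncated_mlmc}: the telescoping level increment $\Delta_{N'}(V)$, reweighted by $1/\mathbb{P}(N'=n)$, makes $\hat{\sigma}_{\mathcal{P}^a_s}(V)$ a nearly unbiased estimator of $\sigma_{\mathcal{P}^a_s}(V)$ whose residual bias decays geometrically in the truncation level $N_{\max}$ while its variance stays bounded; choosing $N_{\max} = O(\log \varepsilon^{-1})$ drives the MLMC contribution below $\varepsilon$ at the cost of only logarithmic sample overhead. Finally I would combine the three bounds through the triangle inequality and take expectations, so that the total error is $O(\varepsilon)$ and the overall sample count is dominated by the $N = O(\varepsilon^{-2})$ TD iterations (the MLMC and per-iteration sampling contributing only $\mathrm{polylog}$ factors), which establishes the claimed $\tilde{O}(\varepsilon^{-2})$ complexity.
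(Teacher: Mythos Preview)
The paper does not actually prove this lemma: it is stated with the citation \citep{xu2025efficient} and used as a black-box input to the proof of Theorem~\ref{theorem: sample complexity}, with no argument supplied beyond the attribution. Your decomposition into the average-cost error $|g_{\mathcal{P}}^{\pi_t}-g_N^{\pi_t}|$, the value-propagation error through the Lipschitz support function, and the MLMC estimator error is exactly the structure one would extract from \citet{xu2025finite,xu2025efficient}, and the semi-norm-contraction argument you describe for the TD critic is the key technical point those references supply; so your sketch is faithful to the result being invoked, even though there is nothing in the present paper to compare it against line by line.
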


Next, we define the performance difference lemma for robust average cost MDPs from \citep{NEURIPS2024_1f28e934}.

\begin{lemma}[Lemma 4.1 in \citep{NEURIPS2024_1f28e934}] \label{lemma: pdl-sun}
    For any two policies \(\pi, \pi'\), we have that
    \begin{align}
        g_{\mathcal{P}}^\pi - g_{\mathcal{P}}^{\pi'} \geq \mathbb{E}_{s \sim d^{\pi'}_{\mathcal{P}}}\left[\left\langle Q^{\pi}_{\mathcal{P}}(s, \cdot), \pi(\cdot | s) - \pi'(\cdot | s)\right\rangle\right]
    \end{align}
    where \(d_{\mathcal{P}}^{\pi'}\) denotes the stationary distribution under the worst-case transition kernel of policy \(\pi'\).
\end{lemma}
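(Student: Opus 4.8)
The plan is to mirror the classical average-cost performance difference identity, which holds with equality in the non-robust case, and to trace precisely where the worst-case maximization over the uncertainty set forces the equality to weaken to the stated inequality. Fix the two policies $\pi, \pi'$. By Assumption~\ref{assumption: irreducibility of markov chain} together with Theorem~\ref{thm: robust bellman equation}, choose a worst-case kernel $P^{\pi'} \in \Omega_{\mathcal{P}}^{\pi'}$ attaining $g_{\mathcal{P}}^{\pi'} = g_{P^{\pi'}}^{\pi'}$, and let $d_{\mathcal{P}}^{\pi'}$ be the (unique, by ergodicity) stationary distribution of the chain induced by $(\pi', P^{\pi'})$. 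Under this kernel the average cost admits the stationary representation $g_{\mathcal{P}}^{\pi'} = \sum_s d_{\mathcal{P}}^{\pi'}(s) \sum_a \pi'(a\mid s)\, r(s,a)$, which is the starting point for the substitution.

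Next I would invoke the robust Bellman equation for $Q_{\mathcal{P}}^{\pi}$ from Theorem~\ref{theorem: robust-bellman-op-main}, namely $Q_{\mathcal{P}}^{\pi}(s,a) = r(s,a) - g_{\mathcal{P}}^{\pi} + \sigma_{\mathcal{P}^a_s}(V^{\pi})$ with $V^\pi(s)=\sum_a\pi(a\mid s)Q_{\mathcal{P}}^\pi(s,a)$, and solve it for the one-step cost $r(s,a)$. The essential step producing the inequality is the extremality of the support function: since the rectangular structure guarantees $P^{\pi'}(\cdot\mid s,a)\in\mathcal{P}^a_s$ for every $(s,a)$, the worst-case value $\sigma_{\mathcal{P}^a_s}(V^\pi)$ dominates the expectation under this particular feasible kernel, giving $\sigma_{\mathcal{P}^a_s}(V^\pi)\ge \sum_{s'}P^{\pi'}(s'\mid s,a)V^\pi(s')$. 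Substituting the resulting upper bound on $r(s,a)$ into the stationary representation of $g_{\mathcal{P}}^{\pi'}$ yields $g_{\mathcal{P}}^{\pi'}\le g_{\mathcal{P}}^{\pi}+\sum_s d_{\mathcal{P}}^{\pi'}(s)\sum_a\pi'(a\mid s)Q_{\mathcal{P}}^{\pi}(s,a)-\sum_s d_{\mathcal{P}}^{\pi'}(s)\sum_a\pi'(a\mid s)\sum_{s'}P^{\pi'}(s'\mid s,a)V^{\pi}(s')$.

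The final step is the telescoping cancellation of the value-function term. Because $d_{\mathcal{P}}^{\pi'}$ is stationary for $(\pi', P^{\pi'})$, the flow identity $\sum_s d_{\mathcal{P}}^{\pi'}(s)\sum_a\pi'(a\mid s)P^{\pi'}(s'\mid s,a)=d_{\mathcal{P}}^{\pi'}(s')$ collapses the last term to $\sum_{s'}d_{\mathcal{P}}^{\pi'}(s')V^{\pi}(s')=\sum_s d_{\mathcal{P}}^{\pi'}(s)\sum_a\pi(a\mid s)Q_{\mathcal{P}}^{\pi}(s,a)$, where the last equality is the definition of $V^\pi$. Combining the two $Q_{\mathcal{P}}^\pi$ sums leaves $g_{\mathcal{P}}^{\pi'}\le g_{\mathcal{P}}^{\pi}+\sum_s d_{\mathcal{P}}^{\pi'}(s)\langle Q_{\mathcal{P}}^{\pi}(s,\cdot),\,\pi'(\cdot\mid s)-\pi(\cdot\mid s)\rangle$, and rearranging gives exactly $g_{\mathcal{P}}^{\pi}-g_{\mathcal{P}}^{\pi'}\ge \mathbb{E}_{s\sim d_{\mathcal{P}}^{\pi'}}[\langle Q_{\mathcal{P}}^{\pi}(s,\cdot),\pi(\cdot\mid s)-\pi'(\cdot\mid s)\rangle]$.

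I expect the main obstacle to be pinning down the direction of the support-function inequality and making it airtight: one must confirm that the worst-case kernel of $\pi'$ is a feasible element of every per-state-action set $\mathcal{P}^a_s$ (so that $\sigma_{\mathcal{P}^a_s}(V^\pi)$ bounds the $P^{\pi'}$-expectation in the correct direction under the paper's cost-maximizing convention), and that such a \emph{stationary} worst-case kernel exists and induces a unique $d_{\mathcal{P}}^{\pi'}$ — both supplied by the stationary reformulation in Eq.~\eqref{eqn: stationary model average cost objective}, Theorem~\ref{thm: robust bellman equation}, and Assumption~\ref{assumption: irreducibility of markov chain}. A minor but necessary check is that the additive ambiguity of $V^\pi$ (unique only up to a constant by Theorem~\ref{thm: robust bellman equation}) does not affect the result: any constant shift contributes $c\sum_a(\pi'(a\mid s)-\pi(a\mid s))=0$ after the cancellation, so the bound is invariant.
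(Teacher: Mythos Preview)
The paper does not give its own proof of this lemma; it is quoted as Lemma~4.1 of \citep{NEURIPS2024_1f28e934} and used as a black box, so there is no in-paper argument to compare against. Your derivation is correct and is precisely the standard route: express $g_{\mathcal{P}}^{\pi'}$ as a stationary average of $r(s,a)$ under the worst-case kernel $P^{\pi'}$ of $\pi'$, substitute $r(s,a)=Q_{\mathcal{P}}^{\pi}(s,a)+g_{\mathcal{P}}^{\pi}-\sigma_{\mathcal{P}^a_s}(V^{\pi})$ from the robust Bellman equation for $\pi$, invoke rectangularity to get $\sigma_{\mathcal{P}^a_s}(V^{\pi})\ge \sum_{s'}P^{\pi'}(s'\mid s,a)V^{\pi}(s')$, and telescope the $V^{\pi}$ term using stationarity of $d_{\mathcal{P}}^{\pi'}$.

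One cosmetic caution worth flagging when you write this up against the paper's notation: the text just below Eq.~\eqref{eqn: robust bellman equation} defines $\sigma_{\mathcal{P}^a_s}$ as a $\min$, which would flip your key inequality. However, the explicit contamination formula $\sigma_{\mathcal{P}^a_s}(V)=(1-R)P^{\circ}_{s,a}V+R\max_s V(s)$ and the cost-maximizing convention $g_{\mathcal{P}}^{\pi}=\max_{P} g_P^{\pi}$ in Eq.~\eqref{eqn: stationary model average cost objective} make clear that $\sigma$ is in fact the pointwise \emph{maximum} over $\mathcal{P}^a_s$; your use of $\sigma_{\mathcal{P}^a_s}(V^{\pi})\ge P^{\pi'}V^{\pi}$ is the correct direction and the ``$\min$'' in the paper is a typo.
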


\subsection{A detailed motivation of Assumption \ref{assumption: performance-gap-assumption}}\label{appendix: motivation-for-assumption}

We draw a direct connection between our Assumption~\ref{assumption: performance-gap-assumption} and the corresponding assumption commonly employed in the \textit{discounted robust MDP} literature.

\begin{itemize}
    \item In the discounted setting, \textbf{Lemma B.3} of \cite{ganguly2025efficient} is central to the convergence analysis and follows as a \textit{direct consequence} of the assumption 
    \(\gamma p(s' \mid s, a) \le \beta p_0(s' \mid s, a)\) for all \(s', s, a\), where \(\beta \in (0, 1)\). 
    This multiplicative dominance condition induces a contraction in the state-transition dynamics, which leads to the bound
    \begin{equation}\label{eq:lemma-b3-njit}
        \Phi(\pi) - \Phi(\pi^*)
        \;\le\;
        \frac{1}{1 - \beta}\,
        \mathbb{E}_{s \sim d_{P^\circ}^{\pi^*}}
        \Big[
            \big\langle
                Q_{\mathcal{P}}^{\pi_t}(s, \cdot),\,
                \pi_t(\cdot \mid s) - \pi^*(\cdot \mid s)
            \big\rangle
        \Big],
    \end{equation}
    where \(\Phi\) denotes their discounted objective function. 
    The factor \(\tfrac{1}{1-\beta}\) quantifies the effective discount-induced dependence between states.

    \item In contrast, in the \textit{average reward} robust MDP setting there is no discount factor, making the bound in \eqref{eq:lemma-b3-njit} inapplicable since no contraction holds. 
    To address this, we introduce Assumption~\ref{assumption: performance-gap-assumption}, which serves as an analogous regularity condition:
    \begin{equation}\label{eq:avg-reward-assumption}
        g_{\mathcal{P}}^{\pi} - g_{\mathcal{P}}^{\hat{\pi}^*}
        \;\le\;
        C\,\mathbb{E}_{s \sim d_{P^\circ}^{\pi}}
        \Big[
            \big\langle
                Q_{P}^{\pi}(s, \cdot),\,
                \pi(\cdot \mid s) - \hat{\pi}^*(\cdot \mid s)
            \big\rangle
        \Big].
    \end{equation}

    Here, the constant \(C \ge 1\) replaces the role of the geometric factor \(\tfrac{1}{1-\beta}\) from the discounted setting. 
    Intuitively, \(C\) captures the extent to which the stationary distribution under the worst-case transition kernel \(\mathcal{P}\) can differ from that under the nominal kernel \(P^\circ\), thus providing a measure of distributional regularity in the absence of discounting.

    This substitution generalizes the discounted assumption to the average-reward regime by replacing the explicit contraction (through \(\beta\)) with a bounded performance coupling constant \(C\), ensuring that the worst-case performance degradation remains controlled relative to the nominal dynamics.
\end{itemize}

\begin{lemma}[Restatement of Theorem 4]
    Using a stepsize of \(\eta = O(\epsilon)\), Algorithm~\ref{alg: acrcac} returns a policy \(\hat{\pi}\) that is both \(\epsilon\)-feasible and \(\epsilon\)-optimal after $T=\frac{18C^2Q_{\max}^2\Delta}{\epsilon^2}\lambda^2 $ iterations. 
\end{lemma}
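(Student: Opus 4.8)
## Proof Proposal for Theorem 4 (Sample Complexity)

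The plan is to prove the iteration complexity bound $T = \frac{18 C^2 Q_{\max}^2 \Delta}{\epsilon^2}\lambda^2$ by analyzing the convergence of the mirror-descent-style actor update via a telescoping argument, and then combine this with the critic's $\tilde{O}(\epsilon^{-2})$ sample complexity from Lemma~\ref{lemma: Q function estimation Error} and the optimality/feasibility transfer from Lemma~\ref{lemma: optimality and feasibility of F policy}. The first step is to bound the per-iteration suboptimality $F_{\mathcal{P}}^{\pi_t} - F_{\mathcal{P}}^{\hat{\pi}^*}$. Using Lemma~\ref{lemma: Q function derivation}, $\nabla F_{\mathcal{P}}^{\pi_t}$ is proportional to the Q-function of the currently active objective $i_{\max}^{\pi_t}$; combining this structure with Assumption~\ref{assumption: performance-gap-assumption} lets me write $F_{\mathcal{P}}^{\pi_t} - F_{\mathcal{P}}^{\hat{\pi}^*} \leq C\, \mathbb{E}_{s \sim d_{P^\circ}^{\pi_t}}\!\left[\langle Q_{\mathcal{P}}^{\pi_t}(s,\cdot),\, \pi_t(\cdot|s) - \hat{\pi}^*(\cdot|s)\rangle\right]$, which is now an expectation against the \emph{nominal} stationary distribution rather than the worst-case one. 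This is the move that unlocks telescoping, since the subsequent Bregman-divergence analysis is applied pointwise in $s$ and the outer nominal expectation does not depend on which policy we are comparing to.

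Next I would decompose the inner product $\langle Q_{\mathcal{P}}^{\pi_t}(s,\cdot),\, \pi_t(\cdot|s) - \hat{\pi}^*(\cdot|s)\rangle$ as $\langle Q, \pi_t - \pi_{t+1}\rangle + \langle Q, \pi_{t+1} - \hat{\pi}^*\rangle$. For the second term, I apply the three-point (generalized Pythagorean) identity for the Euclidean Bregman divergence associated with the actor update $\pi_{t+1} = \argmin_{p}\{\eta \langle \hat{Q}_{\mathcal{P}}^{\pi_t}, p\rangle + \|p - \pi_t(\cdot|s)\|^2\}$, which yields a bound of the form $\frac{1}{\eta}\big(\|\hat{\pi}^* - \pi_t\|^2 - \|\hat{\pi}^* - \pi_{t+1}\|^2\big)$ plus cross terms involving the critic error $\hat{Q} - Q$. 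For the first term $\langle Q, \pi_t - \pi_{t+1}\rangle$, I use Hölder/Cauchy–Schwarz together with the one-step progress bound $\|\pi_{t+1} - \pi_t\| = O(\eta Q_{\max})$ coming from the strong convexity of the update, producing an $O(\eta Q_{\max}^2)$ residual. The critic-error cross terms are controlled in expectation by Lemma~\ref{lemma: Q function estimation Error}, contributing an $O(\varepsilon)$ term per iteration. Summing over $t = 0,\dots,T-1$, the Bregman terms telescope to $\frac{1}{\eta}\|\hat{\pi}^* - \pi_0\|^2 \leq \frac{\Delta}{\eta}$, giving $\frac{1}{T}\sum_t \big(F_{\mathcal{P}}^{\pi_t} - F_{\mathcal{P}}^{\hat{\pi}^*}\big) \leq C\big(\frac{\Delta}{\eta T} + O(\eta Q_{\max}^2) + O(\varepsilon)\big)$.

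Finally, I choose $\eta = \Theta(\epsilon)$ and $\varepsilon = \Theta(\epsilon/(C\lambda))$ (so that after scaling by $\lambda$ the accumulated error is $O(\epsilon)$) to balance the three terms, and solve for $T$. Tracking the constants carefully — the $C^2$ from squaring after the Assumption~\ref{assumption: performance-gap-assumption} step, the $Q_{\max}^2$ from the one-step progress term, the $\Delta$ from the initial Bregman distance, and the factor $18$ from combining the $\frac{1}{2}$-type constants in the three-point inequality and the step-size balancing — produces $T = \frac{18 C^2 Q_{\max}^2 \Delta}{\epsilon^2}\lambda^2$. Since $\min_t F_{\mathcal{P}}^{\pi_t} - F_{\mathcal{P}}^{\hat{\pi}^*} \leq \frac{1}{T}\sum_t(F_{\mathcal{P}}^{\pi_{t+1}} - F_{\mathcal{P}}^{\hat{\pi}^*})$, the returned policy $\hat{\pi} = \argmin_t F_{\mathcal{P}}^{\pi_t}$ is $O(\epsilon/\lambda)$-close to $\hat{\pi}^*$ in $F$-value; rescaling through the definition of $F$ and invoking Lemma~\ref{lemma: optimality and feasibility of F policy} converts this into $\epsilon$-feasibility and $\epsilon$-optimality for the original problem \eqref{eqn: original objective}. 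The main obstacle I anticipate is the switching of the active objective $i_{\max}^{\pi_t}$ across iterations: the Q-function plugged into the update changes index every step, so I must verify that Assumption~\ref{assumption: performance-gap-assumption} (stated for the cost $g_{\mathcal{P}}^{\pi}$) lifts correctly to the max-objective $F_{\mathcal{P}}^{\pi}$ uniformly over which constraint or cost is active — this is where a careful bucketing/case analysis in the spirit of CRPO, combined with the fact that $F_{\mathcal{P}}^{\pi}$ is a pointwise maximum of the individual robust costs, is needed to ensure the telescoping constant does not degrade with $I$.
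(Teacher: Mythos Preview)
Your overall architecture matches the paper's proof: invoke Assumption~\ref{assumption: performance-gap-assumption} to replace the robust performance difference by an inner product against a nominal stationary distribution, split $\langle Q,\pi_t-\hat\pi^*\rangle$ into $\langle Q,\pi_t-\pi_{t+1}\rangle+\langle Q,\pi_{t+1}-\hat\pi^*\rangle$, handle the second piece via the Euclidean three-point identity for the actor step, bound the first by $\tfrac{\eta}{2}Q_{\max}^2$ via H\"older plus completing the square, telescope, and finally rescale by $\lambda$ to turn $F$-suboptimality into $\epsilon$-optimality on the original cost. The constants are also set the same way (equate each of the three error contributions to $\epsilon/6$, giving $\eta=\epsilon/(2CQ_{\max}^2)$ and the stated $T$).

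There is, however, one genuine gap. You take the expectation over $d_{P^\circ}^{\pi_t}$, the nominal stationary distribution of the \emph{current} iterate, and assert this ``unlocks telescoping.'' It does not: when you sum over $t$, the weights $d_{P^\circ}^{\pi_t}(s)$ change at every iteration, so $\sum_t \mathbb{E}_{s\sim d_{P^\circ}^{\pi_t}}\big[\|\hat\pi^*-\pi_t\|^2-\|\hat\pi^*-\pi_{t+1}\|^2\big]$ is not a telescoping sum. The paper's proof instead takes the expectation over $d_{P^\circ}^{\hat\pi^*}$, the nominal stationary distribution of the \emph{fixed} minimizer $\hat\pi^*$; this is constant in $t$, so the Bregman terms collapse to $\tfrac{1}{\eta}\mathbb{E}_{s\sim d_{P^\circ}^{\hat\pi^*}}\|\hat\pi^*-\pi_0\|^2$. (You were likely misled by the body text of Assumption~\ref{assumption: performance-gap-assumption}, which writes $d_{P^\circ}^{\pi}$; the appendix motivation and the proof itself use $d_{P^\circ}^{\hat\pi^*}$, consistent with the discounted analog in Lemma~B.3 of \cite{ganguly2025efficient}.) Once you make this correction, your anticipated obstacle about the switching active index $i_{\max}^{\pi_t}$ also evaporates: since $d_{P^\circ}^{\hat\pi^*}$ is the nominal chain of a single fixed policy, it is independent of which constraint or cost is currently active, so no CRPO-style bucketing is required and the bound does not pick up any dependence on $I$.
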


\begin{proof}


By the non-robust performance difference lemma between the optimal policy and the current policy, we have
    \begin{align}
    \label{eqn: optimal bound}
    F_{\mathcal{P}}^{\pi_t} - F_{\mathcal{P}}^{\hat{\pi}^*} &= g_{\mathcal{P}}^{\pi_t, i} - g_{\mathcal{P}}^{\hat{\pi}^*, i} \\
    &\leq g_{P_{\pi_t}}^{\pi_t, i} - g_{P_{\pi_t}}^{\hat{\pi}^*, i} \\
    &\stackrel{(a)}{=} \mathbb{E}_{s \sim d_{P^\circ}^{\hat{\pi}^*}}\left[\left\langle Q_{\mathcal{P}}^{\pi_t}(s, \cdot), \pi_{t}(\cdot | s) - {\hat{\pi}^*}(\cdot | s)\right\rangle \right]\\
    &= \mathbb{E}_{s \sim d_{P^\circ}^{\hat{\pi}^*}}\left[\left\langle Q_{\mathcal{P}}^{\pi_t}(s, \cdot), \pi_{t}(\cdot | s) - {\pi_{t+1}}(\cdot | s)\right\rangle \right] + \mathbb{E}_{s \sim d_{P^\circ}^{\hat{\pi}^*}}\left[\left\langle Q_{\mathcal{P}}^{\pi_{t}}(s, \cdot), \pi_{t+1}(\cdot | s) - {\hat{\pi}^*}(\cdot | s)\right\rangle \right]
    \end{align}

where \((a)\) uses Assumption \ref{assumption: performance-gap-assumption}. From the three point lemma of Bregman Divergence
\begin{align}\label{eq: optimality-three-point-1}
\begin{split}
     F_{\mathcal{P}}^{\pi_t} - F_{\mathcal{P}}^{\hat{\pi}^*} 
     &\leq 
     \mathbb{E}_{s \sim d_{P^\circ}^{\hat{\pi}^*}}
     \Big[
     \left\langle Q_{\mathcal{P}}^{\pi_t}(s, \cdot), \pi_{t}(\cdot | s) - {\pi_{t+1}}(\cdot | s)\right\rangle 
     - \tfrac{1}{\eta}\|\pi_{t+1}(\cdot | s) - \pi_t(\cdot | s)\|^2
     \\
     &\hspace{2cm}
     + \tfrac{1}{\eta}\|\hat{\pi}^*(\cdot | s) - \pi_t(\cdot | s)\|^2 
     - \tfrac{1}{\eta}\|\hat{\pi}^*(\cdot | s) - \pi_{t+1}(\cdot | s)\|^2 +2\varepsilon \Big]
\end{split}
\end{align}

From Holder's inequality, we have
\begin{align}
    \begin{split}
    \langle Q_{\mathcal{P}}^{\pi_t}(s, \cdot), \pi_{t}(\cdot | s) - {\pi_{t+1}}(\cdot | s)\rangle - \tfrac{1}{\eta}\|\pi_{t+1}(\cdot | s) - \pi_t(\cdot | s)\|^2 &\leq \|Q_{\mathcal{P}}^{\pi_t}(s, \cdot)\|_{\infty} \|\pi_{t}(\cdot | s) - {\pi_{t+1}}(\cdot | s)\|_1 \\ 
    &- \tfrac{1}{2\eta}\|\pi_{t+1}(\cdot | s) - \pi_t(\cdot | s)\|_2^2
    \end{split}
\end{align}

By adding and subtracting $\frac{\eta}{2}\|Q_{\mathcal{P}}^{\pi_t}(s, \cdot)\|_{\infty}^2$ and using the fact that $\|\pi(\cdot | s) - \pi'(\cdot | s)\|_1 \geq \|\pi(\cdot | s) - \pi'(\cdot | s)\|_2$, we get

\begin{align}
    \begin{split}
    \langle Q_{\mathcal{P}}^{\pi_t}(s, \cdot), \pi_{t}(\cdot | s) - {\pi_{t+1}}(\cdot | s)\rangle - \tfrac{1}{\eta}\|\pi_{t+1}(\cdot | s) - \pi_t(\cdot | s)\|^2 &\leq \frac{-1}{2\eta}(\eta\|Q_{\mathcal{P}}^{\pi_t}(s, \cdot)\|_{\infty} - \|\pi_{t+1}(\cdot | s) - \pi_t(\cdot | s)\|)^2 \\&+ \frac{\eta}{2}\|Q_{\mathcal{P}}^{\pi_t}(s, \cdot)\|_{\infty}^2 \\
    &\leq \frac{\eta}{2}\|Q_{\mathcal{P}}^{\pi_t}(s, \cdot)\|_{\infty}^2
    \end{split}
\end{align}

Now summing equation \ref{eq: optimality-three-point-1} over t and taking the average, we have
\begin{align}
    \sum_t (F_{\mathcal{P}}^{\pi_t} - F_{\mathcal{P}}^{\hat{\pi}^*}) &\leq  \sum_{t=0}^{T-1} \left(\mathbb{E}_{s \sim d_{P^\circ}^{\hat{\pi}^*}}\frac{\eta}{2}\|Q_{\mathcal{P}}^{\pi_t}(s, \cdot)\|_{\infty}^2 + \frac{1}{\eta} \mathbb{E}_{s \sim d_{P^\circ}^{\hat{\pi}^*}} \left[ \|\hat{\pi}^*(\cdot | s) - \pi_t(\cdot | s)\|^2 - \|\hat{\pi}^*(\cdot | s) - \pi_{t+1}(\cdot | s)\|^2\right] + 2\varepsilon\right) \\
     &\leq \frac{CT\eta Q_{\max}^2}{2} + \frac{C}{\eta} \mathbb{E}_{s \sim d_{P^\circ}^{\hat{\pi}^*}} \|\hat{\pi}^*(\cdot | s) - \pi_0(\cdot | s)\|^2 + 2CT\varepsilon
\end{align}
where \(C\) is the distribution mismatch coefficient. Let $\hat{\pi}$ be the output of our algorithm where $\hat{\pi} = \arg \min _{t=0,\cdots, T-1} F_\mathcal{P}^{\pi_t}$. Then we have
\begin{align}
    F_\mathcal{P}^{\hat{\pi}} - F_\mathcal{P}^{\hat{\pi}^*} &\leq \frac{1}{T} \sum_t (F_{\mathcal{P}}^{\pi_t} - F_{\mathcal{P}}^{\hat{\pi}^*})\\
    &\leq \frac{C\eta Q_{\max}^2}{2} + \frac{C}{T\eta} \mathbb{E}_{s \sim d_{P^\circ}^{\hat{\pi}^*}} \|\hat{\pi}^*(\cdot | s) - \pi_0(\cdot | s)\|^2 + 2C\varepsilon \leq \frac{\epsilon}{2}
\end{align}

First, we note that $\varepsilon$ can be made arbitrarily small. From \cite{xu2025efficient} and running the critic estimate (Algorithm \ref{alg:robust_avg_reward_td}) with $O(\epsilon^{-2})$ samples, we can obtain $\varepsilon = O(\epsilon)$. 

Solving for \(T\), by equating each of the three terms to $\epsilon/6$,  yields \(T = \frac{18C^2Q_{\max}^2\Delta}{\epsilon^2}\) 
using a step size \(\eta = \frac{\epsilon}{2CQ_{\max}^2}\), where $\Delta = \|\hat{\pi}^*(\cdot | s) - \pi_0(\cdot | s)\|^2$.

However, to achieve both \(\epsilon\)-optimality and \(\epsilon\)-feasibility, we must run the algorithm until ${g_{\mathcal{P}}^{\hat{\pi}, 0}} - {g_{\mathcal{P}}^{\hat{\pi}^*, 0}} \leq \epsilon$. Now, we have 
\begin{align}
    \frac{g_{\mathcal{P}}^{\hat{\pi}, 0}}{\lambda} - \frac{g_{\mathcal{P}}^{\hat{\pi}^*, 0}}{\lambda} \leq F_{\mathcal{P}}^{\hat{\pi}} - F_{\mathcal{P}}^{\hat{\pi}^*} \leq  \frac{\epsilon}{\lambda}
\end{align}

Thus, number of iterations needed is \(\frac{18C^2Q_{\max}^2\Delta}{\epsilon^2}\lambda^2\).

\end{proof}

\section{Numerical Experiments}
\begin{figure}[htbp]
    \centering
    \includegraphics[width=0.9\textwidth]{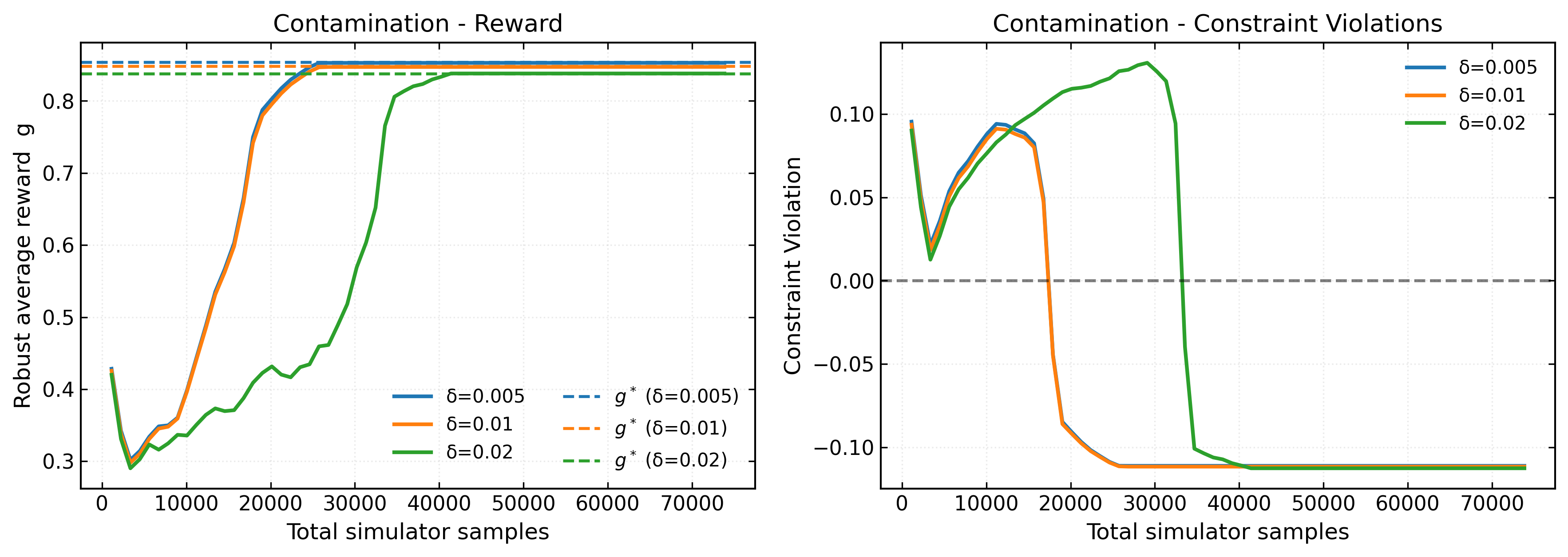}
    \caption{Performance of the Robust Constrained Average-Cost Actor-Critic algorithm under the Contamination uncertainty set.}
    \label{fig:contamination}
\end{figure}

\begin{figure}[htbp]
    \centering
    \includegraphics[width=0.9\textwidth]{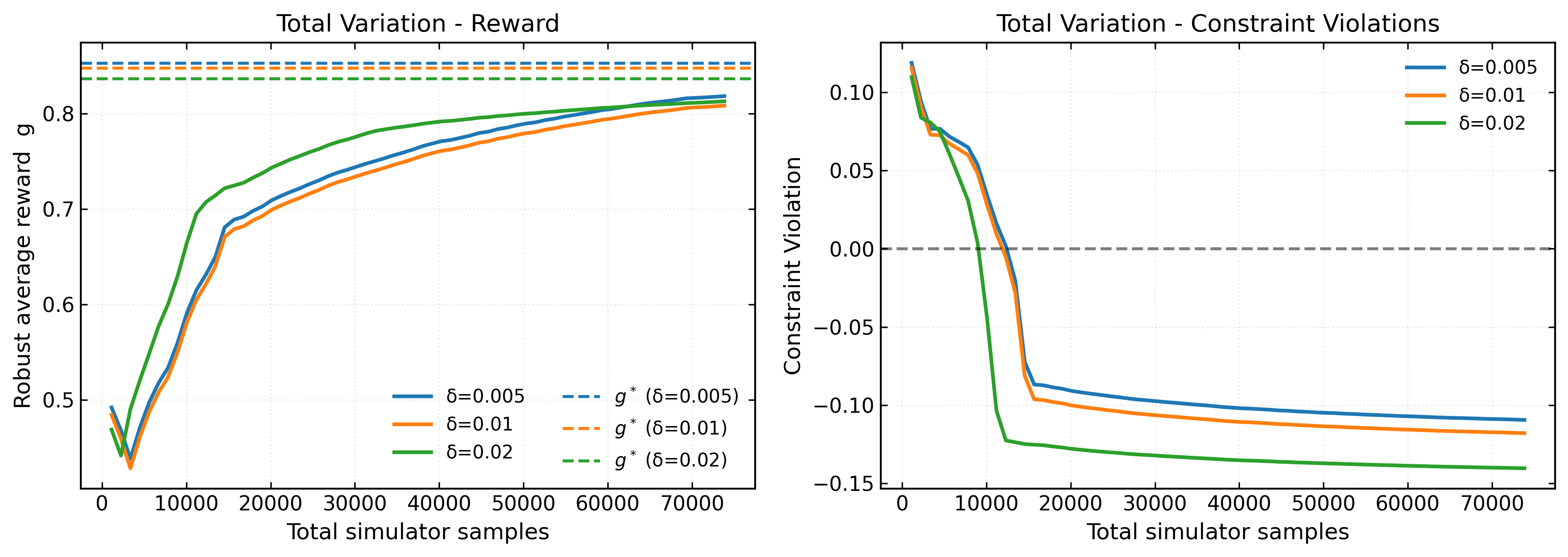}
    \caption{Performance of the Robust Constrained Average-Cost Actor-Critic algorithm under the Total Variation (TV) uncertainty set.}
    \label{fig:tv}
\end{figure}

\begin{figure}[htbp]
    \centering
    \includegraphics[width=0.9\textwidth]{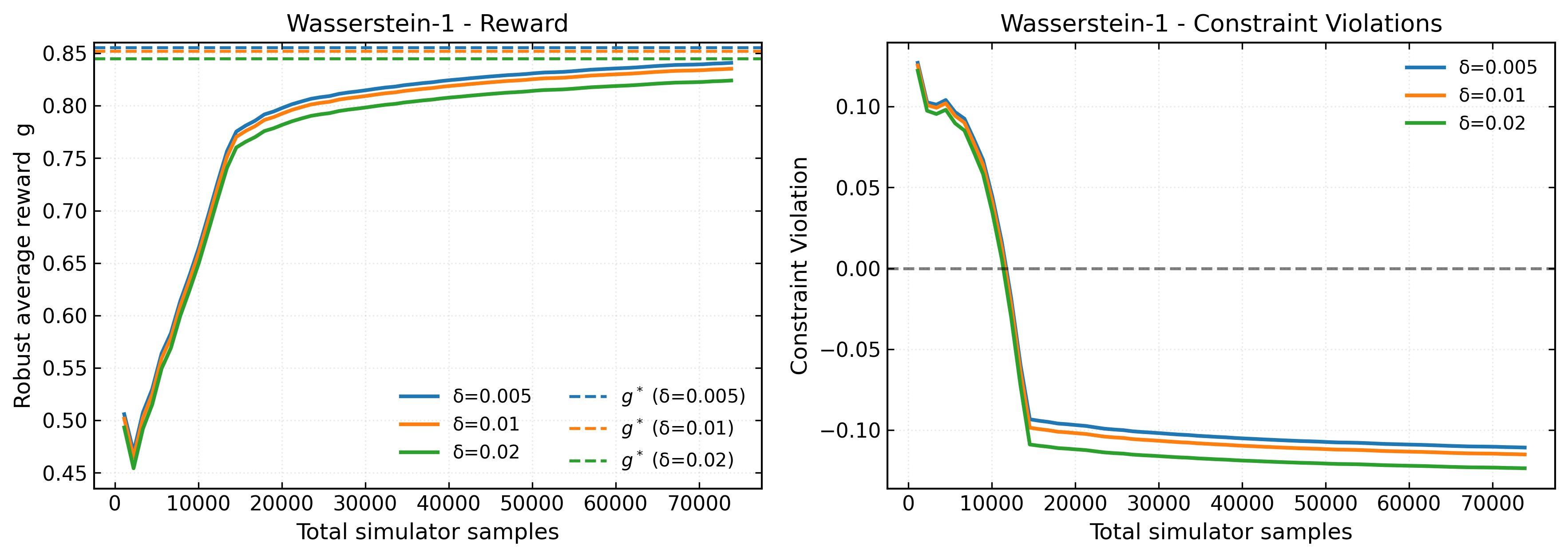}
    \caption{Performance of the Robust Constrained Average-Cost Actor-Critic algorithm under the Wasserstein uncertainty set.}
    \label{fig:wasserstein}
\end{figure}

\begin{figure}[htbp]
    \centering
    \includegraphics[width=0.9\textwidth]{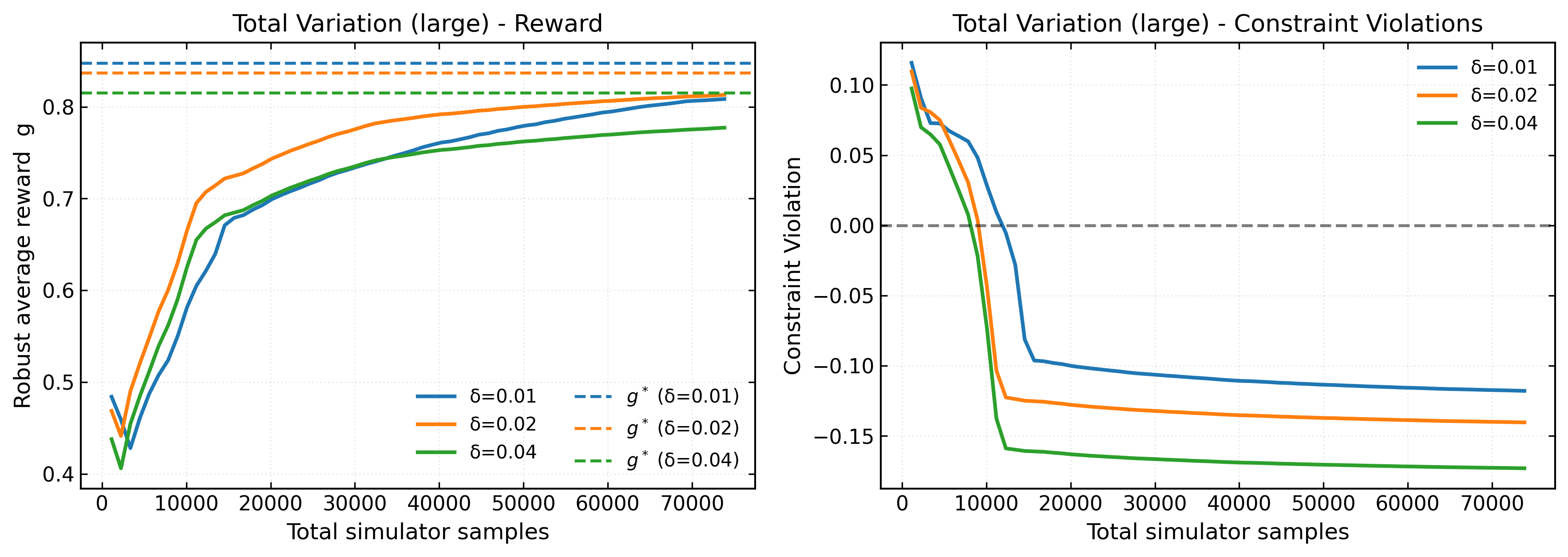}
    \caption{Performance of the Robust Constrained Average-Cost Actor-Critic algorithm under the TV uncertainty set with larger uncertainty set radii.}
    \label{fig:tv_large}
\end{figure}

\begin{figure}[htbp]
    \centering
    \includegraphics[width=0.9\textwidth]{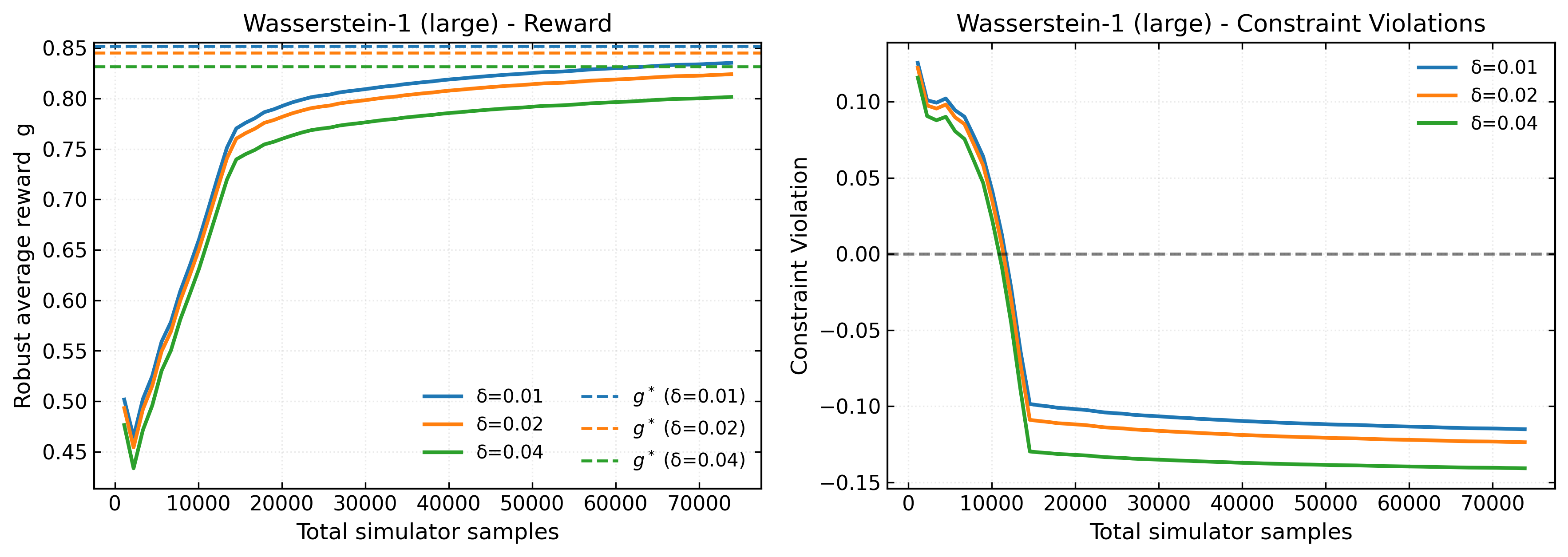}
    \caption{Performance of the Robust Constrained Average-Cost Actor-Critic algorithm under the Wasserstein uncertainty set with larger uncertainty set radii.}
    \label{fig:wasserstein_large}
\end{figure}

\end{document}